\documentclass[]{imsart}
\usepackage{amsthm,amsmath}
\usepackage[colorlinks,citecolor=blue,urlcolor=blue]{hyperref}

\usepackage{amsmath,amsfonts,amssymb,amsthm}

% Typography
\usepackage{enumitem}

%% Packages
\RequirePackage{amsthm,amsmath,amsfonts,amssymb}
\RequirePackage[numbers]{natbib}
\RequirePackage[colorlinks,citecolor=blue,urlcolor=blue]{hyperref}
\RequirePackage{graphicx}
\usepackage{lineno,hyperref}

\startlocaldefs
\modulolinenumbers[5]

\usepackage[utf8]{inputenc}
\usepackage[english]{babel}

% put your definitions there:
\usepackage[margin=1in]{geometry}
\usepackage{bm}
\usepackage{graphicx}
\usepackage{amssymb,amsmath,amsthm,mathrsfs}
\usepackage{epstopdf}
\usepackage{amsfonts,delarray}
\usepackage{algcompatible,amsmath}
\usepackage{rotating,color}
\usepackage{subfigure}
\usepackage{multirow}
\usepackage{titlesec,textcase}
\usepackage{longtable}
\usepackage{bbm}
\usepackage{rotating}
\usepackage{float}
\usepackage[ruled,vlined,linesnumbered]{algorithm2e}
\usepackage{etoolbox}
\SetKwInput{KwInput}{Input}                % Set the Input
\SetKwInput{KwOutput}{Output}              % set the Output
\newcounter{algoline}
\newcommand\Numberline{\refstepcounter{algoline}\nlset{\thealgoline}}
\AtBeginEnvironment{algorithm}{\setcounter{algoline}{0}}

\newtheorem{Theorem}{Theorem}[section]
\newtheorem{Lemma}[Theorem]{Lemma}

\newtheorem{Proposition}[Theorem]{Proposition}

\newtheorem{Definition}[Theorem]{Definition}

\makeatletter
\newcommand{\settheoremtag}[1]{% \settheoremtag{<tag>}
  \let\oldthetheorem\thetheorem% Store \thetheorem
  \renewcommand{\thetheorem}{#1}% Redefine it to a fixed value
  \g@addto@macro\endtheorem{% At \end{theorem}, ...
    \addtocounter{theorem}{-1}% ...restore theorem counter value and...
    \global\let\thetheorem\oldthetheorem}% ...restore \thetheorem
  }
\makeatother

\theoremstyle{definition}

\RequirePackage[normalem]{ulem} %DIF PREAMBLE
 %DIF PREAMBLE

%\newcommand{\shang}{\textcolor{red}}

\definecolor{rp}{RGB}{83,54,106}

\def\boxit#1{\vbox{\hrule\hbox{\vrule\kern6pt\vbox{\kern6pt#1\kern6pt}\kern6pt\vrule}\hrule}}

\endlocaldefs

\begin{document}
\begin{frontmatter}
\title{Community detection in censored hypergraph}

\runtitle{Community detection in censored hypergraph}
%\thankstext{T1}{A sample additional note to the title.}
\runauthor{ }
\begin{aug}

\author[A]{\fnms{Mingao} \snm{Yuan}\ead[label=e1]{mingao.yuan@ndsu.edu}},
%\and
\author[B]{\fnms{Bin} \snm{Zhao}\ead[label=e2]{bin.zhao@ndsu.edu}}
\and
\author[C]{\fnms{Xiaofeng} \snm{Zhao}\ead[label=e3]{zxfstat@ncwu.edu.cn}}
%%%%%%%%%%%%%%%%%%%%%%%%%%%%%%%%%%%%%%%%%%%%%%
%% Addresses                                %%
%%%%%%%%%%%%%%%%%%%%%%%%%%%%%%%%%%%%%%%%%%%%%%
\address[A]{Department of Statistics,
North Dakota State University, USA,
\printead{e1}}

\address[B]{Department of Statistics,
North Dakota State University, USA,
\printead{e2}}

\address[C]{School of Mathematics and Statistics,
North China University of Water Resources and Electric Power, China,
\printead{e3}}
\end{aug}

\begin{abstract}
Community detection refers to the problem of clustering the nodes of a network (either graph or hypergrah) into groups. Various algorithms are available for community detection  and all these methods apply to uncensored networks. In practice, a network may has censored (or missing) values and it is shown that censored values have non-negligible effect on the structural properties of a network. In this paper, we study community detection in censored $m$-uniform hypergraph from information-theoretic point of view. We derive the information-theoretic threshold for exact recovery of the community structure. Besides, we propose a polynomial-time algorithm to exactly recover the community structure up to the threshold. The proposed algorithm consists of a spectral algorithm plus a refinement step. It is also interesting to study whether a single spectral algorithm without refinement achieves the threshold. To this end, we also explore the semi-definite relaxation algorithm and analyze its performance. 

\end{abstract}

\begin{keyword}[class=MSC2020]
\kwd[Primary ]{62G10}
\kwd[; secondary ]{05C80}
\end{keyword}

\begin{keyword}
\kwd{community detection}
\kwd{information-theoretic threshold}
\kwd{censored hypergraph}
\kwd{exact recovery}
\end{keyword}

\end{frontmatter}

\section{Introduction}
\label{S:1}
Many complex data sets can be modelled as a network of items (nodes). One of the most popular topic in network data mining is to understand which items are similar to each other. Community detection refers to the problem of clustering the nodes of network into groups based on similarity. Community detection
is widely used in analysis of social networks \cite{GZFA,ZLZ11}, protein-to-protein interaction networks \cite{CY06}, image segmentation \cite{SM97} and so on. Existing literature in community detection can be roughly classified into two categories: (1) derive information-theoretic threshold for recovering the community structure\cite{ABH16,MNS15,MNS17,CLW18,DGMS21,HWX18,YS21}; (2) devise efficient algorithms to recover the community structure \cite{GD14,GD17,LZ20,LJY15,KSX20,YB21,ALS18,ALS19,HWX16,GMZZ16,WF21,ZW21,LR15,J15}. See \cite{A18,BTYZQ21} for more references. All these methods apply to uncensored networks.

In practice, a network data may have censored or missing values. For example,  in social network, non-response of actors can cause missingness of ties \cite{H09,GH16}; in MRI network, missingness may be due to the high cost involved with PET scanning \cite{LGYS18}. Missing values have non-negligible effects on the structural properties of a network \cite{H09,SMM18}. Most existing algorithms for community detection apply to uncensored networks. A natural question is how to recover communities in a censored network. As far as we know, \cite{ABBS14} is the first to deal with community detection in censored graph and obtains the information-theoretic threshold for exact recovery of communities. Recently, \cite{DGMS21} shows spectral algorithm without refinement step can exactly recover the community structure in censored graph up to the information-theoretic threshold.

In this paper, we consider community detection in censored hypergraph from information-theoretic point of view. More specifically, we derive the information-theoretic threshold for exact recovery. Interestingly, the threshold is generally larger than that in the graph case. In this sense, community detection in censored hypergraph is harder than in censored graph. Besides, we propose a polynomial-time algorithm that can exactly recover the community structure up to the information-theoretic threshold. The proposed algorithm consists of a spectral algorithm plus a refinement step. It is also interesting to study whether a single spectral algorithm without refinement can achieve the threshold as in the censored graph case \cite{DGMS21}. To this end, we study the semi-definite relaxation algorithm and provide a sufficient condition for the algorithm to achieve exact recovery.

\subsection{The censored hypergraph block model}

For a positive integer $n$,
let $\mathcal{V}=\{1,2,\dots,n\}$ denote a set of nodes and $\mathcal{E}$ be a set of subsets of $\mathcal{V}$. The pair $\mathcal{H}_m=(\mathcal{V},\mathcal{E})$ is called an \textit{undirected} $m$-uniform hypergraph if $|e|=m$ for every $e\in\mathcal{E}$. That is, each element $e\in\mathcal{E}$ (called hyperedge) contains exactly $m$ distinct nodes. The hypergraph
$\mathcal{H}_m$ can be represented as
a $m$-dimensional symmetric array $A=(A_{i_1,\ldots,i_m})\in\{0,1\}^{\otimes n^m}$, where $A_{i_1i_2\dots i_m}=1$ if $\{i_1,i_2,\dots,i_m\}$ is a hyperedge and $A_{i_1i_2\dots i_m}=0$ otherwise. Besides, $A_{i_1i_2\dots i_m}=A_{j_1j_2\dots j_m}$ if $\{i_1,i_2,\dots, i_m\}=\{j_1,j_2,\dots, j_m\}$. In this paper, self-loop is not allowed, that is, $A_{i_1i_2\dots i_m}=0$ if $|\{i_1,i_2,\dots, i_m\}|< m$. When $m=2$, $\mathcal{H}_2$ is just the usual graph that has been widely used in community detection problems \cite{A18}. A hypergraph is said to be random if elements of the adjacency tensor are random. Throughout this paper, we focus on hypergraph genereated from the Censored $m$-uniform Hypergraph Stochastic Block Model (CHSBM) $\mathcal{H}_{m}(n,p,q,\alpha)$ defined below. 
\begin{Definition}[Censored $m$-uniform Hypergraph Stochastic Block Model (CHSBM)]
Each node $i\in \mathcal{V}$ is randomly and independently assigned a label $\sigma_i$ with
\[
\mathbb{P}(\sigma_i=+1)=\mathbb{P}(\sigma_i=-1)=\frac{1}{2}.\]
Let $\sigma=(\sigma_1,\dots,\sigma_n)$ be a vector of labels, $\textrm{I}_+(\sigma)=\{i|\sigma_i=+1\}$ and $\textrm{I}_-(\sigma)=\{i|\sigma_i=-1\}$.
The nodes in $\textrm{I}_+(\sigma)$ and $\textrm{I}_-(\sigma)$ constitute two communities. The distinct nodes $i_1,i_2,\dots,i_m$ form a hyperedge with probability $p$ if $\{i_1,i_2,\dots,i_m\}$ is a subset of $\textrm{I}_+(\sigma)$ or $\textrm{I}_-(\sigma)$ and $q$ otherwise. Each hyperedge status is revealed independently with probability $\alpha$. The hyperedge of the resulting hypergraph takes value in $\{1,0,*\}$, where $*$ means a hyperedge is censored or missing (the hyperedge status is not revealed). This model is denoted as $\mathcal{H}_{m}(n,p,q,\alpha)$.
\end{Definition}

Each hyperedge in  $\mathcal{H}_{m}(n,p,q,\alpha)$ with $\alpha<1$ has three status: 1 (present), 0 (absent) or $*$ (censored or missing). When $\alpha=1$, the hypergraph is uncensored and $\mathcal{H}_{m}(n,p,q,1)$ is just the usual hypergraph stochastic block model  \cite{GD14,GD17,CLW18,KBG18,KSX20,YS21}. The Censored Stochastic Block Model $CSBM(p,q,\alpha)$ studied in \cite{DGMS21} corresponds to  $\mathcal{H}_{2}(n,p,q,\alpha)$. Throughout this paper, we assume $p,q\in(0,1)$ are fixed constants, $p>q$ and  $\alpha=\frac{t\log n}{n^{m-1}}$ for some constant $t>0$. 

\subsection{Summary of main result}

Given a hypergraph $A$ generated from $\mathcal{H}_{m}(n,p,q,\alpha)$, community detection refers to the problem of recovering the unknown true label vector $\sigma$, or equivalently, identifying the sets $I_+(\sigma)$ and $I_-(\sigma)$. We say an estimator $\hat{\sigma}$ is an exact recovery of $\sigma$ or $\hat{\sigma}$ exactly recovers $\sigma$ or $\hat{\sigma}$ achieves exact recovery if 
\[\mathbb{P}(\exists s\in\{\pm1\}:\hat{\sigma}=s\sigma)=1-o(1).\]
That is, the estimator $\hat{\sigma}$ is equal to $\sigma$ or $-\sigma$ with probability $1-o(1)$.
If there exists an estimator $\hat{\sigma}$ that exactly recovers $\sigma$, we say exact recovery is possible. Otherwise, we say exact recovery is impossible. 

For $m=2$, \cite{DGMS21} establishes the sharp information-theoretic threshold for exact recovery. The authors show that spectral algorithm can exactly recover the true label without refinement step. It is not immediately clear how $m\geq3$ changes the threshold for exact recovery. More importantly, the spectral method in \cite{DGMS21} can not be straightforwardly extended to $m\geq3$, since the spectral analysis of tensor is still not well developed. 

In this paper, we focus on $m\geq3$ and derive the sharp information-theoretic threshold for exact recovery.  Define   $I_m(p,q)$ as
\begin{equation*}\label{snr}
    I_m(p,q)=\frac{2^{m-1}(m-1)!}{(\sqrt{p}-\sqrt{q})^{2}+(\sqrt{1-p}-\sqrt{1-q})^{2}}.
\end{equation*}
Theorem \ref{MLEimpossible} shows that the maximum likelihood estimator (MLE) does not coincide with the true label with probability $1-o(1)$ if $t<I_m(p,q)$. Theorem \ref{MLEpossible} says that MLE succeeds with probability $1-o(1)$ if $t>I_m(p,q)$. For efficient algorithms, we propose a spectral algorithm plus refinement step that can achieve exact recovery up to the information-theoretic threshold, see Theorem \ref{polytime}. 
Finally, we prove in Theorem \ref{Spectra} that the semidefinite relaxation algorithm can exactly recover the true label under mild conditions.
The following Table \ref{region} summarizes our main results. For $m=2,3$ and $q=0.2$, Figure \ref{phase} displays the region in which exact recovery is impossible (red region) and the region where exact recovery is possible (green region). Interestingly, with fixed $q$, the red region of $m=3$ contains that of $m=2$ as a proper subset. In this sense, exact recovery gets harder as $m$ increases. For fixed $q,m$, 
$I_m(p,q)$ decreases as $p$ goes to one, hence exact recovery becomes easier.  

\begin{table}[H]
\vspace{-2mm}
	\centering
		\caption{\it Regions for exact recovery.}
	\begin{tabular}{|p{3cm} p{4cm} |}
	\hline
	Region & Exact Recovery  \\
	\hline
	(a) $t<I_m(p,q)$ &  	 Exact recovery is impossible        \\
	(b) $t>I_m(p,q)$   & Exact recovery is possible        \\
	\hline
\end{tabular}
\label{region}
\vspace{-5mm}
\end{table}

\begin{figure}[h] 
%\vspace{3mm}
\centering
\includegraphics[width=8cm,height=8cm]{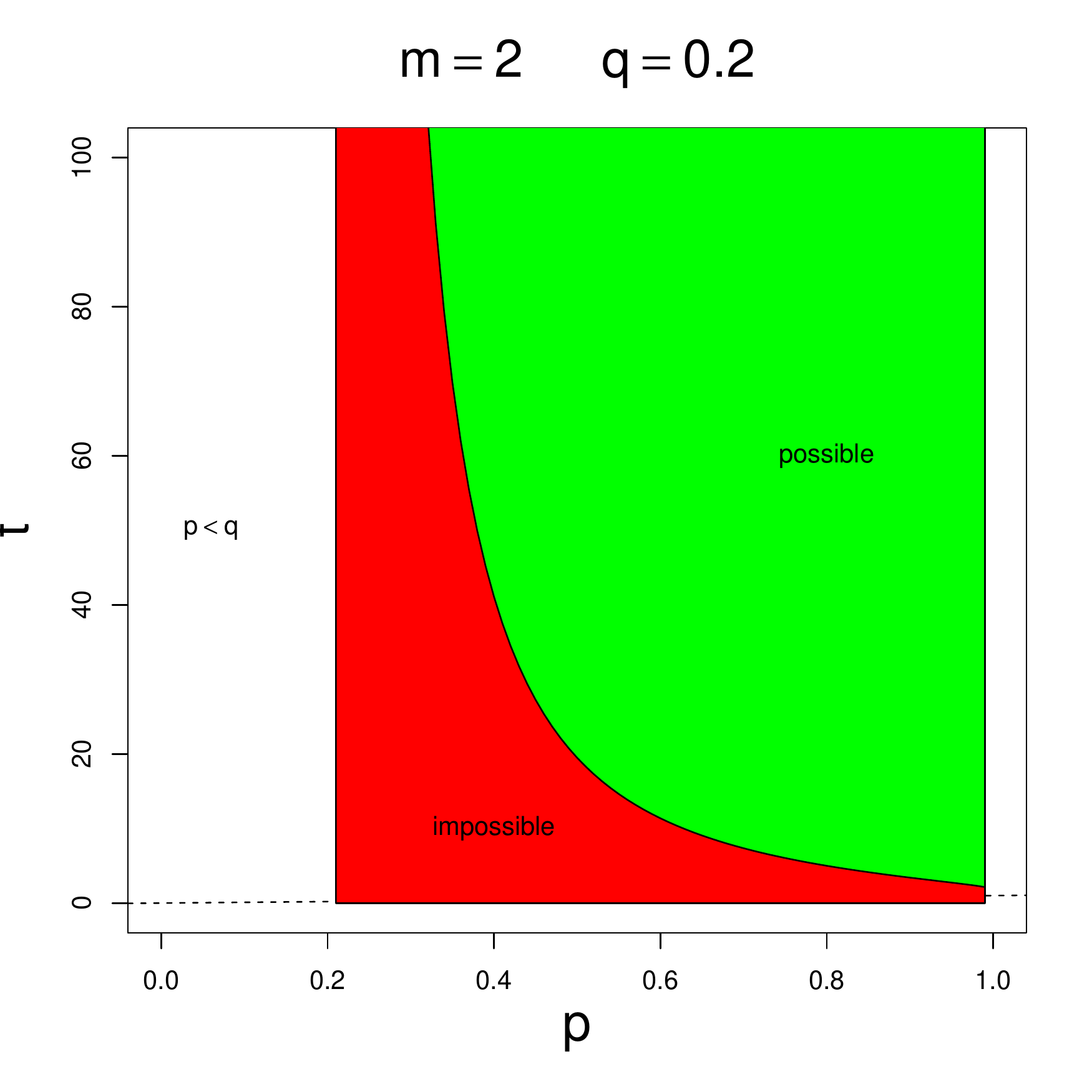}
\vspace{-5mm}
\includegraphics[width=8cm,height=8cm]{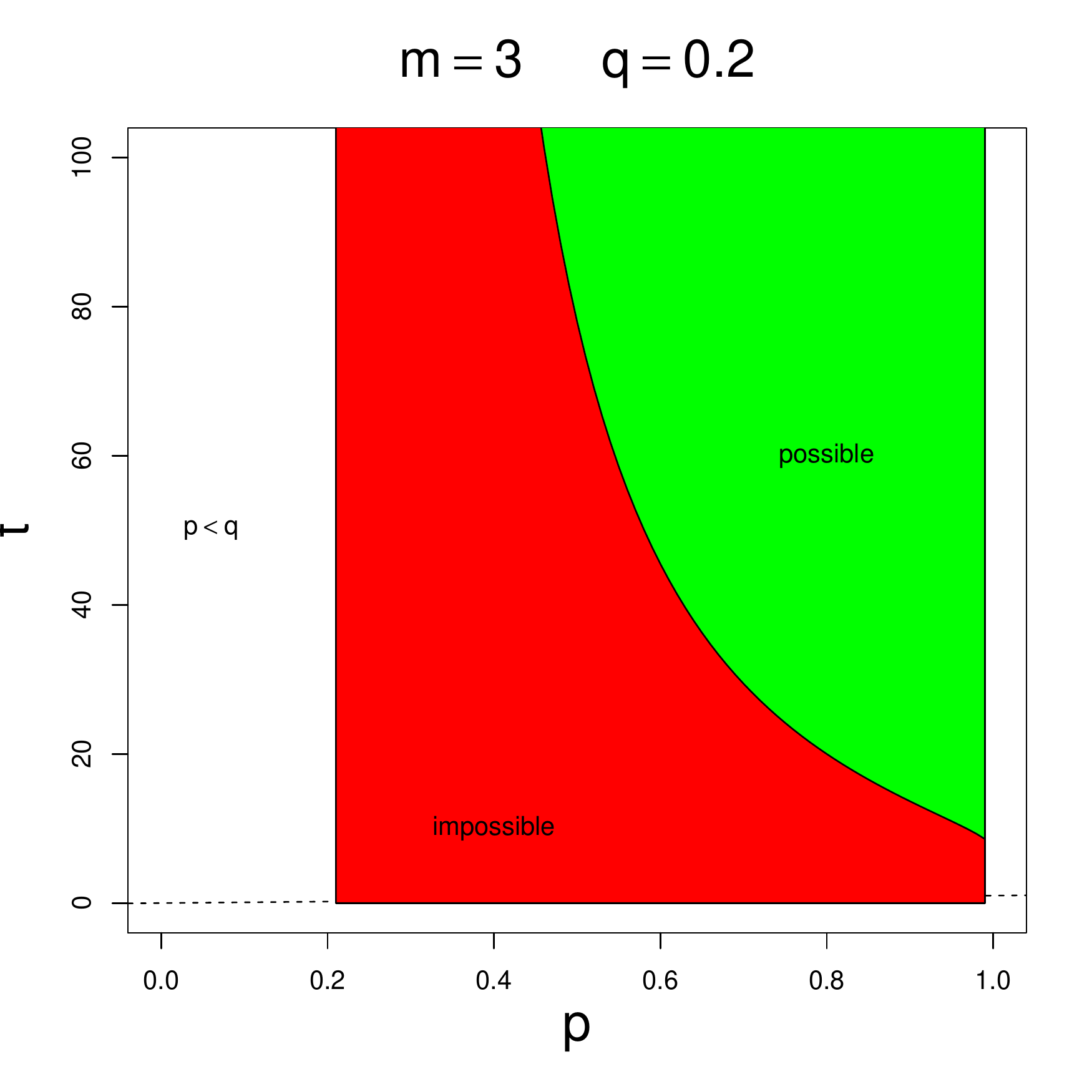}
\caption{\it\small Regions for exact recovery with $m=2,3$ and $q=0.2$. Red: exact recovery is impossible. Green: exact recovery is possible. }
\label{phase}
\end{figure}

Throughout this paper, we adopt the Bachmann-Landau notation $o(1),O(1)$. For two positive sequences $a_n,b_n$, we write $a_n\sim b_n$ if $\lim_{n\rightarrow\infty} \frac{a_n}{b_n}=1$. Denote $a_n\asymp b_n$ if $ 0<c_1\leq\frac{a_n}{b_n}\leq c_2<\infty$ for constants $c_1,c_2$. Denote $a_n\gg b_n$ or $b_n\ll a_n$ if $\lim_{n\rightarrow\infty}\frac{a_n}{b_n}=\infty$. For a square matrix $M$, $||M||$ denotes the operator norm of $M$ and $M \succeq0$ means $M$ is symmetric and positive semidefinite. Define $\langle M,N\rangle=\sum_{i,j}M_{ij}N_{ij}$.

\section{Main Results}\label{subgraph}

In this section, we present information theoretic threshold for exact recovery on the censored hypergraph stochastic block model. Firstly, we use the maximum likelihood method to show that exact recovery is impossible if $t<I_m(p,q)$. Then we prove that MLE can exactly recover the true label if $t>I_m(p,q)$. Combining these two results yields the sharp information-theoretic threshold for exact recovery. This threshold provides a benchmark for developing practical recovery algorithms. Since the time complexity of MLE is not polynomial in $n$, we propose a polynomial-time algorithm that achieves exact recovery if $t>I_m(p,q)$.

\subsection{Sharp threshold for exact recovery}\label{MLE}

In this subsection, we derive a sharp phase transition threshold for exact recovery. The first result specifies a sufficient condition for impossibility of exact recovery. 
\begin{Theorem}\label{MLEimpossible}
For each fixed integer $m\geq2$, if $t<I_m(p,q)$, then $\mathbb{P}(\hat{\sigma}=\sigma)=o(1)$ for any estimator $\hat{\sigma}$. Here $I_m(p,q)$ is defined as 
\begin{equation}\label{snr}
    I_m(p,q)=\frac{2^{m-1}(m-1)!}{(\sqrt{p}-\sqrt{q})^{2}+(\sqrt{1-p}-\sqrt{1-q})^{2}}.
\end{equation}
\end{Theorem}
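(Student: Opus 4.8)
The plan is to show that, with probability $1-o(1)$, there is a vertex whose label can be flipped so as to \emph{strictly} increase the likelihood of the observed hypergraph. This forces the maximum likelihood (equivalently, under the uniform prior, the maximum a posteriori) estimator to miss $\sigma$: by the sign symmetry of the likelihood the class $\{\sigma,-\sigma\}$ is then not a likelihood maximizer, so the MAP class differs from $\sigma$'s class, and since $\mathbb{P}(\hat\sigma=\sigma)\le\mathbb{P}(\hat\sigma\in\{\pm\sigma\})\le\mathbb{P}(\text{MAP class correct})$ for every estimator $\hat\sigma$, the claim follows. For a vertex $v$, write $\sigma^{(v)}$ for $\sigma$ with the label of $v$ flipped. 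Flipping $\sigma_v$ changes the ``within/across community'' status only of those hyperedges $e\ni v$ whose other $m-1$ vertices all lie in a single community; write $S_{\mathrm{same}}^{(v)}$ (resp. $S_{\mathrm{opp}}^{(v)}$) for the collection of such $e$ whose other vertices lie in $v$'s own (resp. the opposite) community. A direct computation then gives
\[
\log\frac{\mathbb{P}(A\mid\sigma)}{\mathbb{P}(A\mid\sigma^{(v)})}=\Big(\log\tfrac{p}{q}\Big)(a_{\mathrm s}-a_{\mathrm o})-\Big(\log\tfrac{1-q}{1-p}\Big)(b_{\mathrm s}-b_{\mathrm o}),
\]
where $a_{\mathrm s},b_{\mathrm s}$ (resp. $a_{\mathrm o},b_{\mathrm o}$) count the \emph{revealed} hyperedges in $S_{\mathrm{same}}^{(v)}$ (resp. $S_{\mathrm{opp}}^{(v)}$) that are present, resp. absent; both coefficients are positive since $p>q$. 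Let $F_v$ be the event that this quantity is $<0$.

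\emph{First moment.} Conditioning on the partition (for which $|I_\pm|=\tfrac n2(1+o(1))$ with high probability), each of $S_{\mathrm{same}}^{(v)},S_{\mathrm{opp}}^{(v)}$ has size $\binom{n/2}{m-1}(1+o(1))=\tfrac{n^{m-1}}{2^{m-1}(m-1)!}(1+o(1))$, so $a_{\mathrm s},b_{\mathrm s},a_{\mathrm o},b_{\mathrm o}$ are, up to $1+o(1)$ factors, independent Poisson variables with means $cp,\,c(1-p),\,cq,\,c(1-q)$, where $c=\tfrac{t\log n}{2^{m-1}(m-1)!}$. A Chernoff bound for $\mathbb{P}(F_v)$ is optimized at exponential tilt $\theta=\tfrac12$ (the rate function is convex and its derivative vanishes there), and evaluating the rate at $\theta=\tfrac12$ collapses, using $e^{\pm\frac12\log(p/q)}=(p/q)^{\pm1/2}$ and the analogous identity for $\tfrac{1-q}{1-p}$, to
\[
c\big[(\sqrt p-\sqrt q)^2+(\sqrt{1-p}-\sqrt{1-q})^2\big]=\frac{t}{I_m(p,q)}\log n .
\]
A matching lower bound comes from the explicit ``tilted typical'' configuration, at which $a_{\mathrm s}=a_{\mathrm o}$ and $b_{\mathrm s}=b_{\mathrm o}$ (so the log-likelihood ratio sits exactly at its boundary, and a one-unit perturbation puts it strictly inside $F_v$), whose probability is estimated by Stirling. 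Hence $\mathbb{P}(F_v)=n^{-(1+o(1))t/I_m(p,q)}$, and $\mathbb{E}\big[\#\{v:F_v\}\big]=n^{1-(1+o(1))t/I_m(p,q)}\to\infty$ precisely because $t<I_m(p,q)$.

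\emph{Second moment.} To upgrade $\mathbb{E}[N]\to\infty$ (with $N:=\#\{v:F_v\}$) to $\mathbb{P}(N\ge1)\to1$ it suffices that $\mathbb{E}[N^2]\le(1+o(1))\mathbb{E}[N]^2$, i.e. that the events $\{F_v\}$ are essentially pairwise independent. The useful structural fact (for $m\ge3$ the neighbourhoods are even disjoint) is that $S_{\mathrm{same}}^{(v)}\cup S_{\mathrm{opp}}^{(v)}$ and $S_{\mathrm{same}}^{(w)}\cup S_{\mathrm{opp}}^{(w)}$ overlap only when $v,w$ are in the same community, and then only in the $\binom{n/2}{m-2}(1+o(1))=\Theta(n^{m-2})$ hyperedges lying entirely in that community and containing both $v$ and $w$; the expected number of those that are revealed is $\alpha\cdot\Theta(n^{m-2})=\Theta(\tfrac{\log n}{n})\to0$. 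Conditioning on the overwhelmingly likely event that no shared hyperedge is revealed makes $F_v,F_w$ independent and changes each $\mathbb{P}(F_v)$ by only $1+o(1)$; on the complementary event one uses that each revealed shared hyperedge shifts the relevant threshold by $O(1)$ and hence changes $\mathbb{P}(F_v\mid\cdot)$ by at most a constant factor, so summing over the $O(n^2)$ pairs costs only $O(\log n)\,\mathbb{E}[N]=o(\mathbb{E}[N]^2)$ since $\mathbb{E}[N]=n^{\delta}$ with $\delta>0$. Then $\mathbb{P}(N\ge1)\ge\mathbb{E}[N]^2/\mathbb{E}[N^2]\to1$, which completes the proof.

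\emph{Main obstacle.} The crux is the exact large-deviations rate for $\mathbb{P}(F_v)$: identifying $\theta=\tfrac12$ as the optimal tilt and verifying that the exponent is $\tfrac{t}{I_m(p,q)}\log n$ on the nose — this is what forces the precise constant $I_m(p,q)$ — together with the matching lower bound, which requires the Poisson approximation of the binomial counts to remain valid in the logarithmically deep tail. The near-independence step is routine in spirit but must be handled with care, because when $t$ is close to $I_m(p,q)$ the first moment is only $n^{\delta}$ with $\delta$ small, so the error contributions from shared (possibly revealed) hyperedges must be shown to be genuinely of lower order rather than merely $o(1)$.
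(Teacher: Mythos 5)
Your argument is sound and reaches the right threshold, but it travels a genuinely different route from the paper's. You look for a \emph{single} vertex whose flip \emph{strictly} increases the likelihood and then run a second-moment method over all $n$ vertices, handling the pairwise dependence (the $\Theta(n^{m-2})$ shared within-community hyperedges, of which $O(\tfrac{\log n}{n})$ are revealed in expectation) by conditioning on the shared hyperedges being censored. The paper instead constructs \emph{pairs} $(i,j)$, one in each community, whose degree profiles are exactly the tilted-typical values $m_1=c\sqrt{pq}$, $m_2=c\sqrt{(1-p)(1-q)}$ on both sides, so that swapping them leaves the likelihood \emph{unchanged}; it decouples these events by first restricting to a sparse random vertex set $S$ of size $\Theta(n/\log^2 n)$ on which no two selected vertices share a revealed hyperedge, applies Chebyshev to the counts $X,Y$ of such vertices on each side, and then invokes Proposition 4.1 of \cite{DGMS21} (many exact ties force the MLE to fail). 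The core large-deviations computation is identical in both proofs — your Chernoff bound at tilt $\theta=\tfrac12$ is exactly the paper's Stirling evaluation of $p_0$, and your ``tilted typical configuration'' is precisely the paper's $(m_1,m_2)$ profile — and it is this computation that produces $I_m(p,q)$. What your approach buys is self-containedness (a strict improvement directly shows $\sigma\notin\arg\max L$, with no appeal to an external tie-breaking proposition) and no need for the sparsification device; what it costs is that the second moment must be controlled carefully near the threshold. On that last point your sketch is slightly thin: the counts $a_{\mathrm s},b_{\mathrm s}$ are jointly multinomial rather than independent Poissons (harmless, since the moment generating function still factorizes over hyperedges), and the claim that conditioning on a revealed shared hyperedge changes $\mathbb{P}(F_v\mid\cdot)$ by ``at most a constant factor'' really delivers an $n^{o(1)}$ factor (Chernoff at an $O(1)$-shifted threshold against the matching lower bound on $\mathbb{P}(F_v)$); this still suffices because $\mathbb{E}[N]=n^{\delta-o(1)}$ with $\delta=1-t/I_m(p,q)>0$ fixed, but it should be stated as such.
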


Theorem \ref{MLEimpossible} states that any estimator can not exactly recover the true label if $t<I_m(p,q)$. For $m=2$, $I_2(p,q)$ is just $t_c(p,q)$ in \cite{DGMS21}. Our result can be considered as a nontrivial extension of Theorem 2.1 in \cite{DGMS21}. Interestingly, with fixed $p,q$, the region $t<I_2(p,q)$ is smaller than $t<I_m(p,q)$ for $m\geq3$. Similar phenomenon exists in exact recovery of community in uncensored hypergraph stochastc block model \cite{KBG18}. However, this phenomenon significantly differs from that in hypothesis testing for communities. For example, \cite{YS21b} derived the sharp boundary for testing the presence of a dense subhypergraph. When the number of nodes in the dense subhypergraph is not too small, the region where any test is asymptotically powerless for $m=2$ is larger than $m\geq3$.

The next result shows that the threshold $I_m(p,q)$ is actually sharp for exact recovery.

\begin{Theorem}\label{MLEpossible}
For each fixed integer $m\geq2$, if $t>I_m(p,q)$ with $I_m(p,q)$ defined in (\ref{snr}), then the maximum likelihood estimator exactly recovers the true label with probability $1-o(1)$. 
\end{Theorem}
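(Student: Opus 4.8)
The plan is to follow the standard ``genie-aided'' route to exact-recovery achievability in block models (as in \cite{ABH16,DGMS21}), adapted to the hypergraph/censoring setting. First I would make the MLE explicit. Since each hyperedge is revealed independently with probability $\alpha$ and, when revealed, a hyperedge $e=\{i_1,\dots,i_m\}$ is $\mathrm{Bernoulli}(p)$ if it is monochromatic under the true labels and $\mathrm{Bernoulli}(q)$ otherwise, for a candidate labeling $\tau$ the log-likelihood reduces (after discarding terms independent of $\tau$) to
\begin{equation*}
  \Phi(\tau)\;=\;\sum_{\substack{e\ \text{revealed}\\ e\ \text{monochromatic under}\ \tau}} W_e,\qquad W_e:=A_e\log\frac{p}{q}+(1-A_e)\log\frac{1-p}{1-q},
\end{equation*}
so $\hat\sigma_{\mathrm{ML}}\in\argmax_\tau\Phi(\tau)$ and $\Phi(\sigma)=\Phi(-\sigma)$. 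It then suffices to show that with probability $1-o(1)$ one has $\Phi(\tau)<\Phi(\sigma)$ for every $\tau\notin\{\pm\sigma\}$, and throughout I would condition on the high-probability event $\big||I_+(\sigma)|-n/2\big|\le\sqrt{n\log n}$.

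The core estimate is the single-vertex bound. Flipping only $\sigma_v$ changes the monochromatic status of a revealed hyperedge $e\ni v$ exactly when the other $m-1$ vertices of $e$ carry a common label: those all equal to $\sigma_v$ (``same side'') become non-monochromatic, those all equal to $-\sigma_v$ (``other side'') become monochromatic, so that
\begin{equation*}
  g_v:=\Phi(\sigma)-\Phi(\sigma^{\{v\}})=\sum_{e\in\mathcal S_v}W_e-\sum_{e\in\mathcal O_v}W_e,
\end{equation*}
where $\mathcal S_v$, $\mathcal O_v$ are the revealed same-side and other-side hyperedges through $v$; given $\sigma$, these index sets are independent with size $\mathrm{Binomial}\big(\binom{n/2+O(\sqrt{n\log n})}{m-1},\alpha\big)$ of mean $\lambda_n=\frac{t\log n}{2^{m-1}(m-1)!}(1+o(1))$, with $A_e\sim\mathrm{Bernoulli}(p)$ on $\mathcal S_v$ and $A_e\sim\mathrm{Bernoulli}(q)$ on $\mathcal O_v$. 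The key computation is a Chernoff bound at $\theta=\tfrac12$ (optimal here by the $p\leftrightarrow q$ symmetry of the rate): using $\E e^{-W_e/2}=\sqrt{pq}+\sqrt{(1-p)(1-q)}$ on $\mathcal S_v$ and $\E e^{W_e/2}=\sqrt{pq}+\sqrt{(1-p)(1-q)}$ on $\mathcal O_v$,
\begin{equation*}
  \mathbb P\big(g_v\le \varepsilon\log n\big)\le n^{\varepsilon/2}\,\exp\!\big(-2\lambda_n(1-\sqrt{pq}-\sqrt{(1-p)(1-q)})(1+o(1))\big)=n^{\varepsilon/2-\frac{t}{I_m(p,q)}(1+o(1))},
\end{equation*}
where I used the identity $2\big(1-\sqrt{pq}-\sqrt{(1-p)(1-q)}\big)=(\sqrt p-\sqrt q)^2+(\sqrt{1-p}-\sqrt{1-q})^2$ together with the definition (\ref{snr}) of $I_m(p,q)$. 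Since $t>I_m(p,q)$, choosing $\varepsilon>0$ small makes the exponent strictly below $-1$, so a union bound over $v\in\mathcal V$ shows that with probability $1-o(1)$ every vertex satisfies $g_v>\varepsilon\log n$; in particular the genie-aided estimator equals $\sigma$.

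To pass from the genie to the true MLE I would split according to $k:=d_H(\tau,\{\pm\sigma\})$. For $k\ge\delta n$ with any fixed $\delta>0$, flipping a set $T$ with $|T|=k$ changes the status of $\Theta(n^{m})$ hyperedges, so $\Phi(\sigma^T)-\Phi(\sigma)$ concentrates near a value of order $-\,t n\log n$; a Bernstein bound gives $\mathbb P(\Phi(\sigma^T)\ge\Phi(\sigma))\le e^{-\Omega(n\log n)}$, which dominates the $\binom{n}{k}\le 2^{n}$ choices of $T$, so whp $d_H(\hat\sigma_{\mathrm{ML}},\{\pm\sigma\})<\delta n$. For $1\le|T|<\delta n$ I would reuse the genie margin: the number of revealed hyperedges through a vertex $v$ that also meet $T\setminus\{v\}$ is at most a constant times $|T|\,n^{m-2}\alpha=O(|T|\log n/n)$, i.e.\ at most a constant times $\delta\log n$, which is $<\varepsilon\log n$ once $\delta$ is chosen small; hence replacing the all-$\sigma$ background by the $\hat\sigma_{\mathrm{ML}}$-background perturbs the margin at $v$ by less than $g_v$, so the label $\sigma_v$ is still strictly preferred. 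If $v\in T$, flipping $v$ would therefore strictly increase $\Phi(\hat\sigma_{\mathrm{ML}})$, contradicting optimality; so whp $\hat\sigma_{\mathrm{ML}}\in\{\pm\sigma\}$, which proves the theorem.

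I expect the main obstacle to be making the second step fully rigorous and uniform: one must control the large-deviation rate of $\Phi(\sigma^T)-\Phi(\sigma)$ uniformly over all flip sets $T$ (including the delicate moderate range of $|T|$ and the regime $|T|\approx n/2$, where one exploits the community-size fluctuation), account for hyperedges meeting $T$ in two or more vertices, and keep the $o(1)$ corrections in all exponents uniform — alongside the routine conditioning on the community sizes and on each $\mathcal S_v,\mathcal O_v$ being nonempty. The single-vertex Chernoff computation and the identity above are the conceptual heart; everything else is bookkeeping.
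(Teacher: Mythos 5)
Your single-hyperedge computation is exactly the one the paper uses: the Chernoff bound at exponent $r=\tfrac12$, the moment generating functions $\E e^{\mp W/2}=1+\alpha\bigl(\sqrt{pq}+\sqrt{(1-p)(1-q)}-1\bigr)$ (or, conditioning on revelation, $\sqrt{pq}+\sqrt{(1-p)(1-q)}$), and the identity $2\bigl(1-\sqrt{pq}-\sqrt{(1-p)(1-q)}\bigr)=(\sqrt p-\sqrt q)^2+(\sqrt{1-p}-\sqrt{1-q})^2$ converting this to an exponent of $-\tfrac{t}{I_m(p,q)}\log n$ per flipped vertex. Where you diverge is the reduction to that computation. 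The paper uses no genie-aided or local-perturbation step: for every alternative $\eta$ at Hamming distance $k$ it applies the Chernoff bound directly to $l(\eta)-l(\sigma)$, which is a sum over $n_k$ independent hyperedges whose monochromatic status changes, with $n_k\sim\frac{k}{2^{m-1}}\binom{n-1}{m-1}$ when $k=o(n)$. The resulting bound $n^{-(1+\epsilon)k}$ beats the at most $n^k$ choices of $\eta$ for $k<n/\log\log n$, and for $k\ge n/\log\log n$ the bound $e^{-cn\log n/\log\log n}$ beats $2^n$; the exponent scales linearly in $k$, so the union bound closes at every scale with no further argument.

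The gap in your version is precisely the step you flag. In the regime $1\le|T|<\delta n$ you bound the number of revealed hyperedges through a vertex $v$ that also meet $T\setminus\{v\}$ by a constant times $|T|\,n^{m-2}\alpha=O(\delta\log n)$. That is the expectation for a \emph{fixed} $T$; it is not uniform over the $\binom{n}{|T|}$ choices of $T$. An adversarial $T$ --- for instance one containing a vertex from each revealed hyperedge through $v$ --- makes this count equal to the full revealed degree of $v$, which is $\Theta(\log n)$, i.e.\ of the same order as your margin $\varepsilon\log n$, so the perturbation argument as stated does not close. Repairing it requires an additional combinatorial lemma (bounding, uniformly, the number of vertices with many revealed hyperedges into any small set, as in the genie-aided proofs of exact recovery for uncensored SBMs), or else abandoning the genie route in favor of the paper's direct union bound over all $\eta$ at distance $k$, which sidesteps the issue entirely because the first-moment exponent already grows linearly in $k$.
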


By Theorem \ref{MLEpossible}, if $t>I_m(p,q)$, the true label can be exactly recovered by the maximum likelihood estimator estimator. Combining Theorem \ref{MLEimpossible} and Theorem \ref{MLEpossible}, we get the sharp boundary $t=I_m(p,q)$ for exact recovery, which is a surface in $\mathbb{R}^3$. For illustration, we visualize the regions $t>I_m(p,q)$ and $t<I_m(p,q)$ with $q=0.2$ and $m=2,3$ in Figure \ref{phase}. The red region represents $t<I_m(p,0.2)$ where exact recovery is impossible. The green region corresponds to $t>I_m(p,0.2)$ where exact recovery is possible. Clearly, the green region for $m=3$ is smaller than $m=2$. In this sense, exact recovery gets harder as $m$ increases. 

\subsection{Efficient algorithm for exact recovery}\label{twostage}
 
Since the time complexity of MLE is not polynomial in $n$, we propose an efficient algorithm to reconstruct the two communities up to the information theoretic threshold. The algorithm starts with a random splitting of the hypergraph $A$ into two parts. Then a spectral algorithm is applied to the first part, followed by a refinement based on the second part. We describe the algorithm in the following three steps.

Firstly, we randomly split the hypergraph $A$ into two parts. Denote $M_m=\{(i_1,i_2,\dots,i_m)\mid 1\leq i_1<\dots <i_m\leq n\}$.
Let $S_1$ be a random subset of $M_m$ obtained by including each element of $M_m$ in $S_1$ with probability $\frac{\log \log n}{\log n}.$ Let $S_2$ be the compliment of $S_1$ in $M_m$, that is, $S_2=M_m-S_1$.
Define a hypergraph $\Tilde{A}$ as
\begin{equation*}
\Tilde{A}_{i_1i_2\dots i_m}=\left\{
\begin{array}{cc}
    \mathbbm{1}{[A_{i_1i_2\dots i_m}=1]}, & \{i_1,i_2,\dots,i_m\}\in S_1, \\
    0, & \textrm{otherwise}.
\end{array}
\right.
\end{equation*}
Here $\mathbbm{1}{[E]}$ is the indicator function of event $E$.
Define hypergraph $\bar{A}$ as
\begin{equation*}
\bar{A}_{i_1i_2\dots i_m}=\left\{
\begin{array}{cc}
    A_{i_1i_2\dots i_m}, & \{i_1,i_2,\dots,i_m\}\in S_2, \\
    *, & \textrm{otherwise}.
\end{array}
\right.
\end{equation*}
Then hypergraph $A$ is randomly divided into two independent hypergraphs $\Tilde{A}$ and $\bar{A}$.

In the second step, we apply the weak recovery algorithm in \cite{ALS18} to $\Tilde{A}$ and obtain two communities, denoted as $\Tilde{I}_+(\sigma)$, and $\Tilde{I}_-(\sigma)$. With probability $1-o(1)$, $n-o(n)$ of the nodes are correctly labelled.

The last step is to refine the communities $\Tilde{I}_+(\sigma)$ and $\Tilde{I}_-(\sigma)$ based on $\bar{A}$.
For a set $S\subset [n]$, define $e(i, S)$ as
\[e(i, S)=\sum\limits_{\substack{i_2,\dots ,i_m\in S\setminus \{i\}\\ i_2<\dots<i_m}}\left(\log \left(\frac{p}{q}\right)\mathbbm{1}{[\bar{A}_{ii_2\dots i_m}=1]}+\log \left(\frac{1-p}{1-q}\right)\mathbbm{1}{[\bar{A}_{i_1i_2\dots i_m}=0]}\right).\]
For each node $i\in \Tilde{I}_+(\sigma)$, flip the label of $i$ if 
\[e(i, \Tilde{I}_+(\sigma))<e(i, \Tilde{I}_-(\sigma)).\]
For each node $j\in \Tilde{I}_-(\sigma)$, flip the label of $j$ if
\[e(j, \Tilde{I}_-(\sigma))<e(j, \Tilde{I}_-(\sigma)).\]
Let $\hat{I}_+(\sigma)$ and $\hat{I}_-(\sigma)$ be the resulting communities. If $|\hat{I}_+(\sigma)|\neq |\tilde{I}_+(\sigma)|$, output $\tilde{I}_+(\sigma)$ and $\tilde{I}_-(\sigma)$; otherwise output $\hat{I}_+(\sigma)$ and $\hat{I}_-(\sigma)$.

The above algorithm is summarized in Algorithm 1.

\begin{Theorem}\label{polytime}
For each fixed integer $m\geq2$, if $t>I_m(p,q)$ with $I_m(p,q)$ defined in (\ref{snr}), then Algorithm 1 exactly recovers the true label with probability $1-o(1)$. 
\end{Theorem}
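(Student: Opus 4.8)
The plan is to follow the now-standard two-stage (spectral/weak-recovery plus local refinement) paradigm, so that the only genuinely new work is controlling the refinement step for the $m$-uniform censored model. First I would record what the first two steps of Algorithm 1 already give us: by the splitting construction, $\tilde A$ and $\bar A$ are independent, $\tilde A$ is distributed (up to the censoring of the $S_1$-entries, which only weakens the signal by a constant factor since $\log\log n/\log n \to 0$ relative to the sparsity $\alpha \asymp \log n / n^{m-1}$) like a weak-recovery-solvable hypergraph SBM, and $\bar A$ retains the full censored information on $S_2 = M_m\setminus S_1$, which is all but a $o(1)$ fraction of the hyperedges. Invoking the weak-recovery guarantee of \cite{ALS18} on $\tilde A$, with probability $1-o(1)$ the partition $\tilde I_\pm(\sigma)$ agrees with the truth (up to global sign) on all but $o(n)$ nodes. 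Fix the good sign and write $\eta = o(n)$ for the number of misclassified nodes.

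Next I would analyze the refinement. The key observation is that, conditionally on $\tilde A$, the score $e(i,\tilde I_+(\sigma)) - e(i,\tilde I_-(\sigma))$ is a sum over $(m-1)$-subsets of the genuinely-$\bar A$ (hence independent, unused) hyperedges incident to $i$, and that replacing $\tilde I_\pm(\sigma)$ by the true $I_\pm(\sigma)$ changes this statistic by at most $O(\eta \cdot n^{m-2})$ terms, which is $o(n^{m-1})$ in expectation-of-magnitude after multiplying by $\alpha$ — i.e., negligible compared to the $\Theta(\log n)$ fluctuations that decide the label. So up to a $1-o(1)$ event I may pretend the refinement uses the true communities. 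Then for a node $i$ with true label $+1$, the probability that it is misclassified (i.e.\ $e(i,I_+) < e(i,I_-)$) is exactly a large-deviations/Chernoff quantity for a sum of $\binom{\,|I_+|-1\,}{m-1} \asymp n^{m-1}/(2^{m-1}(m-1)!)$ independent revealed-or-not three-valued hyperedges: each contributes $\log(p/q)$, $\log((1-p)/(1-q))$, or $0$, with the $+1$-community parameters on the $I_+$ side and $q$-parameters on the $I_-$ side. A Chernoff computation — optimizing $\mathbb{E}[e^{\lambda(\cdot)}]$ at $\lambda = 1/2$ by the usual symmetry of the Hellinger-type rate — gives misclassification probability $n^{-(1+o(1))\,t/I_m(p,q)}$, where the exponent is precisely $\alpha$ times the per-hyperedge rate, the per-hyperedge count is $n^{m-1}/(2^{m-1}(m-1)!)$, $\alpha = t\log n/n^{m-1}$, and the Rényi-$1/2$ rate of the pair of hyperedge laws is $-2\log\!\big(\sqrt{pq}+\sqrt{(1-p)(1-q)}\big)$, which after the algebraic identity $1 - \sqrt{pq} - \sqrt{(1-p)(1-q)} = \tfrac12[(\sqrt p-\sqrt q)^2 + (\sqrt{1-p}-\sqrt{1-q})^2]$ collapses to exactly $t/I_m(p,q)$. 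Hence when $t > I_m(p,q)$ the per-node failure probability is $n^{-1-\varepsilon}$ for some $\varepsilon>0$; a union bound over the $n$ nodes shows that with probability $1-o(1)$ every node is correctly labelled by the refinement, so $\hat I_\pm(\sigma) = I_\pm(\sigma)$ and in particular $|\hat I_+(\sigma)| = |\tilde I_+(\sigma)|$ fails only on the $o(1)$ event, so Algorithm 1 outputs $\hat I_\pm(\sigma)$; this proves exact recovery.

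I would organize the write-up as: (1) a lemma recording the independence of $\tilde A,\bar A$ and the distributional description of each; (2) a lemma stating the \cite{ALS18} weak-recovery conclusion in the present notation; (3) the "oracle" lemma that replacing $\tilde I_\pm$ by $I_\pm$ in the refinement changes nothing for all but $o(n)$ nodes — here I'd control $\sum_i \mathbbm{1}[\text{score changes sign}]$ via a first-moment bound, using that each misclassified node perturbs only $O(n^{m-2})$ of the $(m-1)$-subsets defining $e(i,\cdot)$; (4) the Chernoff lemma giving the per-node exponent $t/I_m(p,q)$, which is where the definition of $I_m(p,q)$ in (\ref{snr}) is reverse-engineered; (5) the union bound and the check of the output rule $|\hat I_+(\sigma)| = |\tilde I_+(\sigma)|$.

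The main obstacle is step (3), the decoupling between the weak-recovery output and the refinement statistic. One must be careful that $\tilde I_\pm(\sigma)$ is a random function of $\tilde A$ only, so it is independent of $\bar A$; but the \emph{set} of misclassified nodes is adversarial-looking, and a naive bound would let those $\eta = o(n)$ errors corrupt the score of a target node $i$ by up to $\binom{\eta}{1}\binom{n}{m-2} \asymp \eta n^{m-2}$ hyperedge terms. Multiplying by $\log(p/q)$ and noting only an $\alpha \asymp \log n/n^{m-1}$ fraction are revealed, the expected corruption to $i$'s score is $O(\eta n^{m-2}\alpha) = O(\eta \log n / n) = o(\log n)$, which is $o(1)$ times the $\Theta(\log n)$ gap — but making "$o(\log n)$ in expectation" into "$o(\log n)$ uniformly over all $n$ target nodes simultaneously" requires a second-moment or Bernstein argument and a union bound, and this is the delicate quantitative heart of the proof. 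Everything else (the Chernoff optimization at $\lambda=1/2$, the Hellinger identity, the final union bound) is routine.
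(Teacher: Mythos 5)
Your proposal is correct and follows essentially the same route as the paper: random splitting, the weak-recovery guarantee of \cite{ALS18} on $\tilde A$, a refinement analysis that separates the ``oracle'' score (Chernoff at $\lambda=1/2$, Hellinger identity, exponent $t/I_m(p,q)$) from the corruption caused by the $o(n)$ misclassified nodes, and a final union bound over nodes. The ``delicate quantitative heart'' you flag --- making the corruption $o(\log n)$ uniformly over all target nodes --- is exactly the paper's term $(II)$, which it handles by setting the tolerance at $r\delta\log n=\log n/\sqrt{\log(1/\delta)}=o(\log n)$ and applying multiplicative Chernoff bounds to get a per-node failure probability $O(n^{-2})$ before the union bound, precisely the Bernstein-type argument you anticipate.
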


Theorem \ref{polytime} states that the information theoretic threshold can be attained by an algorithm with polynomial time complexity.
\vskip 5mm

\begingroup
\LinesNumberedHidden
\begin{algorithm}[H]
  \SetNlSty{textbf}{}{:}
  \setcounter{AlgoLine}{-1}
  \Numberline \KwInput{A censored $m$-uniform hypergraph $A$ generated from $\mathcal{H}_{m}(n,p,q,\alpha)$.}
  \Numberline \textbf{Step1: random splitting.} \\ Randomly select elements in $M_m=\{(i_1,i_2,\dots,i_m)\mid 1\leq i_1<\dots <i_m\leq n\}$ with probability $\frac{\log \log n}{\log n}$ to form a subset $S_1\subset M_m$ and let $S_2=M_m-S_1$. Construct hypergraph $\tilde{A}$ as $\Tilde{A}_{i_1i_2\dots i_m}=\mathbbm{1}{[A_{i_1i_2\dots i_m}=1]}$, if ${i_1,i_2,\dots,i_m}\in S_1$ and $\Tilde{A}_{i_1i_2\dots i_m}=0$ otherwise. Construct hypergraph $\bar{A}$ as $\bar{A}_{i_1i_2\dots i_m}=A_{i_1i_2\dots i_m}$ if ${i_1,i_2,\dots,i_m}\in S_2$ and $\bar{A}_{i_1i_2\dots i_m}=*$ otherwise. \\
  \Numberline \textbf{Step 2: spectral algorithm.}
  \caption{Spectral algorithm plus refinement for exact recovery} Apply the weak recovery algorithm in \cite{ALS18} to $\Tilde{A}$ and denote the community output as $\Tilde{I}_+(\sigma)$, $\Tilde{I}_-(\sigma)$. \\
  \Numberline \textbf{Step 3: refinement.} \\
  Flip the label of  $i\in\Tilde{I}_+(\sigma)$ if 
  $e(i, \Tilde{I}_+(\sigma))<e(i, \Tilde{I}_-(\sigma)).$\\ Flip the label of $j\in\Tilde{I}_-(\sigma)$ if 
  $e(j, \Tilde{I}_-(\sigma))<e(j, \Tilde{I}_-(\sigma)).$
  \\
  Let $\hat{I}_+(\sigma)$ and $\hat{I}_-(\sigma)$ be the resulting communities. \\
  \Numberline \KwOutput{ If $|\hat{I}_+(\sigma)|\neq |\tilde{I}_+(\sigma)|$, output $\tilde{I}_+(\sigma)$ and $\tilde{I}_-(\sigma)$; otherwise output $\hat{I}_+(\sigma)$ and $\hat{I}_-(\sigma)$.}
\end{algorithm}
\endgroup

 \subsection{Semidefinite relaxation algorithm}
 
 In subsection \ref{twostage}, we show that spectral algorithm with refinement step can achieve\ exact recovery.
It is also interesting to study whether a single spectral algorithm without refinement achieves the threshold. In graph case ($m=2$), the answer is confirmative and the semidefinite relaxation algorithm and spectral algorithm are shown to succeed without refinement step \cite{HWX16,DGMS21}. In hyergraph case ($m\geq3$), either censored or uncensored, it is still an open problem. In this subsection, we study the semidefinite relaxation algorithm and analyze its performance.  To this end, we define a new hypergraph based on the given hypergraph $A$ and transform it to a weighted graph. Then we show the semidefinite relaxation algorithm applied to the weighted graph can achieve exact recovery. 

Define hypergraph $\widetilde{A}$ based on $A$ as
\[\widetilde{A}_{i_1i_2\dots i_m}=\mathbbm{1}{[A_{i_1i_2\dots i_m}=1]},\] 
and $\widetilde{A}_{i_1i_2\dots i_m}=0$ if $|\{i_1,i_2,\dots ,i_m\}|\leq m-1$. Each hyperedge $\widetilde{A}_{i_1i_2\dots i_m}$ takes value in $\{1,0\}$.
The hypergraph $\widetilde{A}$ shares the same community structure as $A$, since
\begin{equation}\nonumber
\mathbb{E}(\widetilde{A}_{i_1i_2\dots i_m})=
\begin{cases}
p\alpha,&  \{i_1,i_2,\dots ,i_m\}\subset I_+(\sigma)\ or\ I_-(\sigma);\\
q\alpha,& \textrm{otherwise.} 
\end{cases}
\end{equation}
Next, we construct a weighted graph $G=[G_{ij}]$ based on $\widetilde{A}$ by
\[G_{ij}=\sum_{1\leq i_3<\dots<i_m\leq n}\widetilde{A}_{iji_3\dots i_m}.\]
Define the semidefinite program problem (SDP) as
\begin{equation}\label{semi}
\begin{array}{rrclcl}
\displaystyle \max_{Y} & \multicolumn{3}{l}{\langle G,Y\rangle}\\
\textrm{s.t.} & Y \succeq0\\
&\langle Y,J\rangle=0  \\
& Y_{ii}=0,\ i\in [n],
\end{array}
\end{equation}
where $J$ is $n\times n$ all-one matrix.
Suppose $\sigma$ is the true label and denote $Y=\sigma^T\sigma$. Let $\widehat{Y}$ be the solution to semidefinite program problem (\ref{semi}). The following result provides a sufficient condition under which  $\widehat{Y}$ is an exact recovery of $Y$.
\begin{Theorem}\label{Spectra}
For each fixed integer $m\geq2$, let 
\[J_m(p,q)=\frac{2^{m+2}(m-2)![mp-(m-2^{m})q] }{(p-q)^2}.\]
If $t>J_m(p,q)$, then $\mathbb{P}(\widehat{Y}=Y)=1-o(1)$, where $Y=\sigma^T\sigma$ with true label $\sigma$.
\end{Theorem}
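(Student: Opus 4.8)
The plan is to follow the standard SDP-analysis route: construct an explicit dual certificate (Lagrange multiplier) for the program \eqref{semi} and show that, with probability $1-o(1)$, the planted solution $Y=\sigma^T\sigma$ is the unique optimizer. The feasible set of \eqref{semi} consists of PSD matrices with zero diagonal and $\langle Y,J\rangle=0$; the constraint $\langle Y,J\rangle=0$ together with $Y\succeq 0$ forces $Y\mathbf 1 = 0$ on the relevant eigenspace, so $\sigma^T\sigma$ (whose row sums vanish because the two communities have equal size up to $O(\sqrt{n\log n})$ fluctuation — one first conditions on or handles the small imbalance) is feasible. Strong duality for SDPs gives a dual of the form: find a diagonal matrix $D=\mathrm{diag}(d_i)$ and a scalar $\lambda$ such that
\[
S \;:=\; D + \lambda J - G \;\succeq\; 0,\qquad S\sigma = 0 .
\]
The second condition pins down $d_i = \frac{1}{\sigma_i}\big(G\sigma - \lambda J\sigma\big)_i = (G\sigma)_i\sigma_i$ after using $J\sigma = (\mathbf 1^T\sigma)\mathbf 1$ and the (near-)balance $\mathbf 1^T\sigma \approx 0$; explicitly $d_i = \sum_{j} G_{ij}\sigma_i\sigma_j$, which is the net "agreement weight" at node $i$ in the weighted graph $G$. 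Standard complementary-slackness bookkeeping then shows that $Y=\sigma^T\sigma$ is the unique optimum as soon as $S\succeq 0$ with $\sigma$ the only null vector, i.e. as soon as
\[
\lambda_2\!\big(D + \lambda J - G\big) \;>\; 0 .
\]

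The core of the argument is therefore to verify this positivity bound with high probability, and this is where the quantity $J_m(p,q)$ enters. Write $G = \mathbb E G + (G-\mathbb E G)$. The mean $\mathbb E G$ has block structure (diagonal blocks with entry $\sim p\alpha\binom{n-2}{m-2}$, off-diagonal blocks $\sim q\alpha\binom{n-2}{m-2}$), so on the subspace orthogonal to $\sigma$ and $\mathbf 1$ it contributes $(p-q)\alpha\binom{n-2}{m-2}$ up to lower order; combined with $\alpha = t\log n/n^{m-1}$ this is $\asymp t(p-q)\log n/(2(m-2)!)$. The diagonal term $D$ concentrates: each $d_i$ is a sum of $\binom{n-1}{m-1}$ weakly dependent Bernoulli-type contributions, and a Bernstein/Chernoff estimate gives $\min_i d_i \ge \big(\text{const}\big)\log n$ with the constant governed by the large-deviation rate of $e(i,\cdot)$. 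The fluctuation term $\|G-\mathbb E G\|$ must be controlled: $G$ is a sum over $(m-2)$-subsets of rank-structured sparse random matrices, and one shows $\|G-\mathbb E G\| = o(\log n)$ — either by a direct moment/trace bound, by truncating high-degree "bad" hyperedge-configurations and applying a bounded-degree spectral-norm bound (as in the Feige–Ofek style arguments used in the graph case), or by a matrix-Bernstein inequality after a suitable decoupling of the $m$-uniform structure. Assembling these: on $\{\sigma,\mathbf 1\}^\perp$,
\[
x^T(D+\lambda J - G)x \;\ge\; \min_i d_i \;-\; \|\,\mathbb E G|_{\perp}\,\| \;-\; \|G-\mathbb E G\| \;\ge\; \big(c(t)-o(1)\big)\log n,
\]
and choosing $\lambda$ large enough (e.g. $\lambda \asymp n^{-1}\log n$, which is negligible in norm but kills the $\mathbf 1$ direction) makes $S\succeq 0$ with the right kernel. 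The condition $t>J_m(p,q)$ is exactly what forces the net constant $c(t)$ to be strictly positive, i.e. it balances the diagonal gain $\min_i d_i$ against the spectral loss from $G$ restricted to the orthogonal complement.

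The main obstacle is the spectral-norm control of the centered weighted graph $G-\mathbb E G$ built from an $m$-uniform censored hypergraph. In the graph case ($m=2$) this is the classical sparse-random-graph concentration; for $m\ge 3$ the weight $G_{ij}=\sum_{i_3<\dots<i_m}\widetilde A_{iji_3\dots i_m}$ aggregates $\Theta(n^{m-2})$ hyperedges, so $G_{ij}$ is no longer $\{0,1\}$-valued and can have heavy upper tails coming from vertices that lie in many revealed hyperedges; a naive bound loses logarithmic factors. Handling this — presumably by a regularization/truncation step that removes the $o(n)$ atypical vertices before bounding $\|G-\mathbb E G\|$, together with a careful accounting of how the combinatorial factors $\binom{n-2}{m-2}$ and the extra powers of $2$ (from the number of monochromatic $m$-subsets through a fixed pair) produce the precise coefficient $2^{m+2}(m-2)!\,[mp-(m-2^m)q]$ — is the delicate part, and it is why the stated threshold $J_m(p,q)$ is only a sufficient (not matching) condition. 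The remaining pieces (feasibility of $\sigma^T\sigma$ via the community-size concentration, the $d_i$ concentration via Chernoff, and the complementary-slackness uniqueness argument) are routine once the norm bound is in hand.
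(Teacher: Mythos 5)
Your overall architecture (reduce optimality of $Y=\sigma^T\sigma$ to a positivity condition on the second/third smallest eigenvalue of a certificate matrix on $\{\sigma,\mathbf 1\}^{\perp}$, then split into mean plus fluctuation) is the same as the paper's, which invokes Proposition~2 of Kim--Bandeira--Goemans for the hypergraph Laplacian $L=\sum_e\widetilde A_e[(\mathbb{I}_e^{\intercal}\sigma_e)\mathrm{diag}(\sigma_e)-\sigma_e\sigma_e^{\intercal}]$, computes $M\mathbb{E}(L)M=\frac{p_1-q_1}{2}n\binom{n/2-2}{m-2}M$, and controls $\|M(L-\mathbb{E}L)M\|$ by matrix Bernstein. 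However, your proposal has a genuine quantitative gap at exactly the step you identify as "the core of the argument." You assert that one shows $\|G-\mathbb{E}G\|=o(\log n)$. This is false in the regime $\alpha=t\log n/n^{m-1}$: the centered matrix is a sum of $\Theta(n^m)$ independent hyperedge terms with variance proxy $v^2\asymp \alpha n^{m-1}\asymp t\log n$, so matrix Bernstein (or any second-moment method) yields a fluctuation of order $\sqrt{v^2\log n}\asymp\sqrt{t}\,\log n$, i.e.\ of the \emph{same} order $\log n$ as the spectral gap of the mean, which is $\asymp t(p-q)\log n$. The entire content of the threshold $J_m(p,q)$ is the resulting constant-versus-constant comparison $t(p-q)\gtrsim\sqrt{t\,[mp-(m-2^m)q]}\,$; if the fluctuation really were $o(\log n)$, any $t>0$ would suffice and the theorem would assert exact recovery below the information-theoretic threshold $I_m(p,q)$, a contradiction. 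Your later sentence that $t>J_m(p,q)$ "balances the diagonal gain against the spectral loss" is the right intuition, but it is inconsistent with the $o(\log n)$ claim, and the proposal never actually carries out the variance computation (the term $\frac{m}{2^{m-1}}p-\frac{m-2^m}{2^{m-1}}q$ per pair of nodes) that produces the stated constant.

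A secondary point: the obstacle you flag --- heavy tails of the aggregated weights $G_{ij}$ and the need for Feige--Ofek-type regularization or "decoupling of the $m$-uniform structure" --- does not actually arise. Since distinct hyperedges are revealed independently, one writes the centered certificate as $\sum_e(\widetilde A_e-\mathbb{E}\widetilde A_e)\,MY_eM$ with $\|MY_eM\|\le 2m$ deterministically, and applies matrix Bernstein directly at the hyperedge level; no truncation of high-degree vertices is needed, and this is precisely what the paper does. So the fix for your proof is not a sharper norm bound but an honest one: replace the $o(\log n)$ claim by the Bernstein bound $(1+\epsilon)\sqrt{2v^2\log n}$ with $v^2=\frac{n^{m-1}}{(m-2)!}\bigl[\frac{m}{2^{m-1}}p_1(1-p_1)-\frac{m-2^m}{2^{m-1}}q_1(1-q_1)\bigr](1+o(1))$, compare it with the mean gap $\frac{p_1-q_1}{2^{m-1}}\frac{n^{m-1}}{(m-2)!}$, and solve for $t$; this is where $J_m(p,q)$ comes from.
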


 Note that $J_m(p,q)>I_m(p,q)$ for each $m\geq2$. When $m=2$ and the graph is uncensored, $\widehat{Y}$ can exactly recover the true label up to the information theoretic threshold \cite{HWX16}.  
However, for $m\geq3$, it is unclear whether $\widehat{Y}$ succeeds or not in the range $I_m(p,q)<t<J_m(p,q)$. Similar gap exists in uncencored hypergraph case \cite{KBG18}.

\section{Proof of Theorem \ref{MLEimpossible}}\label{fiv}

In this section, we provide proof of Theorem \ref{MLEimpossible}. 

\textit{Proof of Theorem \ref{MLEimpossible} :}
Let $l(\sigma)$ be the log-likelihood function of a label $\sigma$. By Proposition 4.1 in \cite{DGMS21}, if there are labels $\eta_t$ ($1\leq t\leq k_n$) with $k_n\rightarrow\infty$ such that $l(\eta_1)=l(\eta_2)=\dots=l(\eta_{k_n})$, then 
the maximum likelihood estimator (MLE) fails to exactly recover the true label with probability $1-o(1)$. Our proof proceeds by constructing labels $\eta_t$ ($1\leq t\leq k_n$) with $k_n\rightarrow\infty$ under the condition $t<I_m(p,q)$.

Firstly, we write down the explicit expression of the likelihood function. Note that for distinct nodes $i_1,i_2,\dots,i_m$, we have
\begin{equation*}
A_{i_1i_2\dots i_m}=\left\{
\begin{array}{cc}
    1 & , \\
    0 & , \\
    * &.
\end{array}
\right.
\end{equation*}
For convenience, let $\mathbbm{1}[E]$ be the indicator function of event $E$ and
\[\mathbbm{1}_{i_1i_2\dots i_m}(\sigma)=\mathbbm{1}{[\sigma_{i_1}=\sigma_{i_2}=\dots=\sigma_{i_m}]}.\]
Then the likelihood function for $\sigma$ given an observation of hypergraph $A$ from $\mathcal{H}_{m}(n,p,q,\alpha)$ is
\begin{eqnarray*}
L&=&\prod_{1\leq i_1<\dots <i_m\leq n}(p\alpha)^{\mathbbm{1}{[A_{i_1i_2\dots i_m}=1]}\mathbbm{1}_{i_1i_2\dots i_m}(\sigma)}[\alpha(1-p)]^{\mathbbm{1}{[A_{i_1i_2\dots i_m}=0]}\mathbbm{1}_{i_1i_2\dots i_m}(\sigma)} \\
&&\times (q\alpha)^{\mathbbm{1}{[A_{i_1i_2\dots i_m}=1]}(1-\mathbbm{1}_{i_1i_2\dots i_m}(\sigma))}[\alpha(1-q)]^{\mathbbm{1}{[A_{i_1i_2\dots i_m}=0]}(1-\mathbbm{1}_{i_1i_2\dots i_m}(\sigma))}(1-\alpha)^{\mathbbm{1}{[A_{i_1i_2\dots i_m}=*]}} \\
&=&\prod_{1\leq i_1<\dots <i_m\leq n}(1-\alpha)^{\mathbbm{1}{[A_{i_1i_2\dots i_m}=*]}}(q\alpha)^{\mathbbm{1}[A_{i_1i_2\dots i_m}=1]}\left(\frac{p}{q}\right)^{\mathbbm{1}{[A_{i_1i_2\dots i_m}=1]}\mathbbm{1}_{i_1i_2\dots i_m}(\sigma)} \\
&&\times [\alpha(1-q)]^{\mathbbm{1}{[A_{i_1i_2\dots i_m}=0]}}\left(\frac{1-p}{1-q}\right)^{\mathbbm{1}{[A_{i_1i_2\dots i_m}=0]}\mathbbm{1}_{i_1i_2\dots i_m}(\sigma)} \\
&=&\prod_{1\leq i_1<\dots <i_m\leq n}(1-\alpha)^{\mathbbm{1}{[A_{i_1i_2\dots i_m}=*]}}(q\alpha)^{\mathbbm{1}[A_{i_1i_2\dots i_m}=1]}[\alpha(1-q)]^{\mathbbm{1}{[A_{i_1i_2\dots i_m}=0]}} \\
&&\times \prod_{1\leq i_1<\dots <i_m\leq n}\left(\frac{p}{q}\right)^{\mathbbm{1}{[A_{i_1i_2\dots i_m}=1]}\mathbbm{1}_{i_1i_2\dots i_m}(\sigma)}\left(\frac{1-p}{1-q}\right)^{\mathbbm{1}{[A_{i_1i_2\dots i_m}=0]}\mathbbm{1}_{i_1i_2\dots i_m}(\sigma)}.
\end{eqnarray*}
The maximum likelihood estimator(MLE) is obtained by maximizing $L$ with respect to $\sigma$. The first product factor of $L$ does not involve $\sigma$.  Hence we only need to maximize the second product factor of $L$ to get MLE. Denote
\begin{eqnarray*}
l(\sigma)&=&\sum\limits_{1\leq i_1<\dots <i_m\leq n}\left[\log \left(\frac{p}{q}\right)\mathbbm{1}{[A_{i_1i_2\dots i_m}=1]}\mathbbm{1}_{i_1i_2\dots i_m}(\sigma)+\log \left(\frac{1-p}{1-q}\right)\mathbbm{1}{[A_{i_1i_2\dots i_m}=0]}\mathbbm{1}_{i_1i_2\dots i_m}(\sigma)\right].
\end{eqnarray*}
 The log-likelihood function is equal to
\begin{equation}\label{loglike}
\log L=R_{n}+l(\sigma),
\end{equation}
where $R_n$ is independent of $\sigma$.

Below we construct labels $\eta_t$ ($1\leq t\leq k_n$) with $k_n\rightarrow\infty$ under the condition $t<I_m(p,q)$. Since $R_n$ is independent of $\sigma$. We only need to focus on $l(\sigma)$.

Note that
\begin{eqnarray*}
l(\sigma)&=&\left[\log \left(\frac{p}{q}\right)\mathbbm{1}{[A_{i_1\dots i_m}=1]}+\log \left(\frac{1-p}{1-q}\right)\mathbbm{1}{[A_{i_1\dots i_m}=0]}\right]\mathbbm{1}{[\sigma_{i_1}=\dots=\sigma_{i_m}=+1]} \\
&&+\left[\log \left(\frac{p}{q}\right)\mathbbm{1}{[A_{i_1\dots i_m}=1]}+\log \left(\frac{1-p}{1-q}\right)\mathbbm{1}{[A_{i_1\dots i_m}=0]}\right]\mathbbm{1}{[\sigma_{i_1}=\dots=\sigma_{i_m}=-1]}.
\end{eqnarray*}
Suppose $i_0\in I_+(\sigma)$ has exactly $m_{1}$ present hyperedges and $m_{2}$ absent hyperedges in $I_{+}(\sigma)$ and $I_{-}(\sigma)$ respectively. Suppose $j_0\in I_-(\sigma)$ has exactly $m_{1}$ present hyperedges and $m_{2}$ absent hyperedges in $I_{+}(\sigma)$ and $I_{-}(\sigma)$ respectively.
Then $l(\sigma)$ remains the same if we flip the label of $i_0$ and $j_0$. Let $\tilde{\sigma}$ be labels obtained from $\sigma$ by flipping the labels of $i_0,j_0$. We shall verify that $l(\sigma)=l(\Tilde{\sigma})$. To prove this, let $T_{1}=\log \left(\frac{p}{q}\right),\;T_{2}=\log \left(\frac{1-p}{1-q}\right),\;$ then 
\begin{eqnarray*}
l(\sigma)
&=&\left(T_{1}\sum\limits_{i_1i_2\dots i_m}\mathbbm{1}{[A_{i_1i_2\dots i_m}=1]}+T_{2}\sum\limits_{i_1i_2\dots i_m}\mathbbm{1}{[A_{i_1i_2\dots i_m}=0]}\right)\mathbbm{1}{[\sigma_{i_1}=\dots=\sigma_{i_m}=+1]} \\
&+&\left(T_{1}\sum\limits_{i_1i_2\dots i_m}\mathbbm{1}{[A_{i_1i_2\dots i_m}=1]}+T_{2}\sum\limits_{i_1i_2\dots i_m}\mathbbm{1}{[A_{i_1i_2\dots i_m}=0]}\right)\mathbbm{1}{[\sigma_{i_1}=\dots=\sigma_{i_m}=-1]}.
\end{eqnarray*}
Further, $l(\sigma)$ can be written as
\begin{eqnarray*}
l(\sigma)&=&T_{1}\sum\limits_{\substack{i_1i_2\dots i_m\in I_+(\sigma) \\ i_1i_2\dots i_m \neq i_0}}\mathbbm{1}{[A_{i_1i_2\dots i_m}=1]}+T_{1}\sum\limits_{\substack{i_2\dots i_m\in I_+(\sigma) \\ i_2\dots i_m \neq i_0}}\mathbbm{1}{[A_{i_0i_2\dots i_m}=1]} \\
&+&T_{2}\sum\limits_{\substack{i_1i_2\dots i_m\in I_+(\sigma) \\ i_1i_2\dots i_m \neq i_0}}\mathbbm{1}{[A_{i_1i_2\dots i_m}=0]}+T_{2}\sum\limits_{\substack{i_2\dots i_m\in I_+(\sigma) \\ i_2\dots i_m \neq i_0}}\mathbbm{1}{[A_{i_0i_2\dots i_m}=0]} \\
&+&T_{1}\sum\limits_{\substack{i_1i_2\dots i_m\in I_-(\sigma) \\ i_1i_2\dots i_m \neq j_0}}\mathbbm{1}{[A_{i_1i_2\dots i_m}=1]}+T_{1}\sum\limits_{\substack{i_2\dots i_m\in I_-(\sigma) \\ i_2\dots i_m \neq j_0}}\mathbbm{1}{[A_{j_0i_2\dots i_m}=1]} \\
&+&T_{2}\sum\limits_{\substack{i_1i_2\dots i_m\in I_-(\sigma) \\ i_1i_2\dots i_m \neq j_0}}\mathbbm{1}{[A_{i_1i_2\dots i_m}=0]}+T_{2}\sum\limits_{\substack{i_2\dots i_m\in I_-(\sigma) \\ i_2\dots i_m \neq j_0}}\mathbbm{1}{[A_{j_0i_2\dots i_m}=0]},
\end{eqnarray*}
and
\begin{eqnarray*}
l(\Tilde{\sigma})&=&T_{1}\sum\limits_{\substack{i_1i_2\dots i_m\in I_+(\sigma) \\ i_1i_2\dots i_m \neq j_0}}\mathbbm{1}{[A_{i_1i_2\dots i_m}=1]}+T_{1}\sum\limits_{\substack{i_2\dots i_m\in I_+(\sigma) \\ i_2\dots i_m \neq j_0}}\mathbbm{1}{[A_{j_0i_2\dots i_m}=1]} \\
&+&T_{2}\sum\limits_{\substack{i_1i_2\dots i_m\in I_+(\sigma) \\ i_1i_2\dots i_m \neq j_0}}\mathbbm{1}{[A_{i_1i_2\dots i_m}=0]}+T_{2}\sum\limits_{\substack{i_2\dots i_m\in I_+(\sigma) \\ i_2\dots i_m \neq j_0}}\mathbbm{1}{[A_{j_0i_2\dots i_m}=0]} \\
&+&T_{1}\sum\limits_{\substack{i_1i_2\dots i_m\in I_-(\sigma) \\ i_1i_2\dots i_m \neq i_0}}\mathbbm{1}{[A_{i_1i_2\dots i_m}=1]}+T_{1}\sum\limits_{\substack{i_2\dots i_m\in I_-(\sigma) \\ i_2\dots i_m \neq i_0}}\mathbbm{1}{[A_{i_0i_2\dots i_m}=1]} \\
&+&T_{2}\sum\limits_{\substack{i_1i_2\dots i_m\in I_-(\sigma) \\ i_1i_2\dots i_m \neq i_0}}\mathbbm{1}{[A_{i_1i_2\dots i_m}=0]}+T_{2}\sum\limits_{\substack{i_2\dots i_m\in I_-(\sigma) \\ i_2\dots i_m \neq i_0}}\mathbbm{1}{[A_{i_0i_2\dots i_m}=0]}
\end{eqnarray*}
Then $l(\sigma)=l(\Tilde{\sigma})$ by the assumption of $i_0$ and $j_0$.

Next we will show there are $k_n$ ($k_n\rightarrow\infty$) such pairs. More specifically, we will show that there exists $i_{1},i_{2},\dots,i_{k}\in I_{+}(\sigma)$ and $j_{1},j_{2},\dots,j_{k}\in I_{-}(\sigma)$ with $k\gg 1$ such that the likelihood function remains unchanged if we flip the label of a pair $(i_{t},j_{t})$ $t=1,2,\dots,k$. Let $\eta_t$ be the label obtained by flipping the label of $i_{t},j_{t}$ in $\sigma$. Then $l(\eta_t)=l(\sigma)$ for $1\leq t\leq k\rightarrow\infty$.

Let $n_{1}=|I_{+}(\sigma)|$ and $n_{2}=|I_{-}(\sigma)|$. Then $n_{1},n_{2}=\frac{n}{2}(1+O(n^{-\frac{1}{3}}))$ with probability $1-o(1)$. Hence we can take $n_{1}=n_{2}=\frac{n}{2}$ below.
Let $S_{+}\subset I_{+}(\sigma)$ be a random subset with $|S_{+}|=\frac{n}{\log^{2}n}$ and 
$S_{-}\subset I_{-}(\sigma)$ be a random subset with $|S_{-}|=\frac{n}{\log^{2}n}$. Denote $S=S_{+}\cup S_{-}.$ Define 
\[S_{0}=\left\{i\in S|any\;i_{2},\dots,i_{t}\in S,i_{t+1},\dots,i_{m}\in S^{c},\;s.t.\;A_{ii_{2}\dots i_{t}i_{t+1}\dots i_{m}}= *,\;t\geq 2\right\}.
\] 
 For each node $i\in S_0$, hyperedge $A_{ii_2\dots i_m}$ is possibly revealed if and only if $\{i_2,\dots ,i_m\}\subset I_{+}(\sigma)-S$ or $\{i_2,\dots ,i_m\}\subset I_{-}(\sigma)-S$.
 
We will show $|S_{0}|=\frac{2n(1+o(1))}{\log^{2}n}$ with probability $1-o(1).$ Let 
\begin{center}
$T=\sum\limits_{t=2}^{m}\sum\limits_{\substack{i_{1},\dots,i_{t}\in S \\ i_{t+1},\dots,i_{m}\in S^{c}}}\mathbbm{1}{[A_{i_{1}i_{2}\dots i_{t}i_{t+1}\dots i_{m}}\neq *]}.$
\end{center}
The expectation of $T$ is
\begin{eqnarray*}
\mathbb{E}T&=&\sum\limits_{t=2}^{m}\binom{\frac{2n}{\log^{2}n}}{t}\binom{n-\frac{2n}{\log^{2}n}}{m-t}\alpha \\
&=&\sum\limits_{t=2}^{m}\binom{\frac{2n}{\log^{2}n}}{t}\binom{n-\frac{2n}{\log^{2}n}}{m-t}\frac{t\log n}{n^{m-1}} \\
&=&\frac{c\cdot n^{m}}{\log^{4}n}\frac{t\log n}{n^{m-1}} \\
&\asymp&\frac{n}{\log^{3}n}.
\end{eqnarray*}
Hence, by Markov inequality we have
\[\mathbb{P}\left(T\geq \frac{n}{\log^{2}n\sqrt{\log n}}\right)\leq \frac{1}{\frac{n}{\log^{2}n\sqrt{\log n}}}\frac{c\cdot n}{\log^{3}n}=\frac{\sqrt{\log n}}{\log n}=o(1).\]
Then $T<\frac{n}{\log^{2}n\sqrt{\log n}}$ with probablity $1-o(1).$ 
Hence $|S_{0}|=\frac{2n}{\log^{2}n}(1+o(1))$ with probability $1-o(1)$.
\\

Let $m_{1}=\frac{\sqrt{pq}t\log n}{2^{m-1}(m-1)!}$, $m_{2}=\frac{\sqrt{(1-p)(1-q)}t\log n}{2^{m-1}(m-1)!}.$ For some $k\gg 1,$ we will show that there exists $i_{t}\in S_{0}\cap S_{+},\; (1\leq t\leq k)$ such that $i_{t}$ has $m_{1}$ present hyperedges and $m_{2}$ absent hyperedges in $I_{+}(\sigma)$ and $I_{-}(\sigma)$ respectively. Denote
\begin{center}
$\widetilde{n}_{1}=\binom{n_{1}-\frac{2n}{\log^{2}n}}{m-1}\sim \frac{n^{m-1}}{2^{m-1}(m-1)!}.$
\end{center}
Let $i_{0}\in S_{0}\cap S_{+},$ the probability that $i_{0}$ has $m_{1}$ present hyperedges, $m_{2}$ absent hyperedges in $I_{+}(\sigma)$ and $I_{-}(\sigma)$ respectively is,
\begin{eqnarray*}
p_{0}&=&\frac{\widetilde{n}_{1}!}{m_{1}!m_{2}!(\widetilde{n}_1-m_{1}-m_{2})!}\cdot (\alpha p)^{m_{1}}[\alpha (1-p)]^{m_{2}}(1-\alpha)^{(\widetilde{n}_1-m_{1}-m_{2})} \\
&&\times\frac{\widetilde{n}_1!}{m_{1}!m_{2}!(\widetilde{n}_1-m_{1}-m_{2})!}\cdot (\alpha q)^{m_{1}}[\alpha (1-q)]^{m_{2}}(1-\alpha)^{(\widetilde{n}_1-m_{1}-m_{2})} \\
&\sim&\frac{1}{m_{1}!^{2}m_{2}!^{2}}\left[\frac{\widetilde{n}_1^{\widetilde{n}_1+\frac{1}{2}}e^{-\widetilde{n}_1}}{(\widetilde{n}_1-m_{1}-m_{2})^{\widetilde{n}_1-m_{1}-m_{2}+\frac{1}{2}}e^{-\widetilde{n}_1+m_{1}+m_{2}}}\right]^{2}(\alpha^{2}pq)^{m_{1}}[\alpha^{2}(1-p)(1-q)]^{m_{2}} (1-\alpha)^{2(\widetilde{n}_1-m_{1}-m_{2})}\\
&=&\frac{1}{m_{1}!^{2}m_{2}!^{2}}\left[\frac{(\widetilde{n}_1-m_{1}-m_{2})^{m_{1}+m_{2}}}{e^{m_{1}+m_{2}}(1-\frac{m_{1}+m_{2}}{\widetilde{n}_1})^{\widetilde{n}_1+\frac{1}{2}}}\right]^{2}(\alpha^{2}pq)^{m_{1}}[\alpha^{2}(1-p)(1-q)]^{m_{2}}(1-\alpha)^{2(\widetilde{n}_1-m_{1}-m_{2})} \\
&=&\frac{1}{m_{1}!^{2}m_{2}!^{2}}\left[\frac{\widetilde{n}_1^{m_{1}+m_{2}}}{e^{m_{1}+m_{2}}e^{-(m_{1}+m_{2})}}\right]^{2}(\alpha^{2}pq)^{m_{1}}\left[\alpha^{2}(1-p)(1-q)\right]^{m_{2}}e^{-\frac{t\log n}{2^{m-2}(m-1)!}} \\
&=&\frac{\tilde{n}_{1}^{2(m_{1}+m_{2})}}{m_{1}!^{2}m_{2}!^{2}}e^{-\frac{t\log n}{2^{m-2}(m-1)!}}(\alpha^{2}pq)^{m_{1}}[\alpha^{2}(1-p)(1-q)]^{m_{2}} \\
&=&\frac{n^{-\frac{t}{2^{m-2}(m-1)!}}}{m_{1}!^{2}m_{2}!^{2}}(\alpha^{2}\tilde{n}_{1}^2pq)^{m_{1}}[\alpha^{2}\tilde{n}_{1}^2(1-p)(1-q)]^{m_{2}} \\
&=&n^{-\frac{t}{2^{m-2}(m-1)!}}\frac{e^{2(m_{1}+m_{2})}}{4\pi^{2}m_{1}m_{2}}(\frac{\alpha^{2}\tilde{n}_{1}^2pq}{m_{1}^{2}})^{m_{1}}(\frac{\alpha^{2}\tilde{n}_{1}^2(1-p)(1-q)}{m_{2}^{2}})^{m_{2}} \\
&=&\frac{1}{4\pi^{2}m_{1}m_{2}}n^{-\frac{t}{2^{m-2}(m-1)!}}e^{\frac{\sqrt{pq}+\sqrt{(1-p)(1-q)}}{2^{m-2}(m-1)!}t\log n} \\
&=&\frac{1}{4\pi^{2}m_{1}m_{2}}n^{-\frac{t}{2^{m-2}(m-1)!}[1-\sqrt{pq}-\sqrt{(1-p)(1-q)}]} \\
&=&\frac{1}{4\pi^{2}m_{1}m_{2}}n^{-t\cdot \frac{(\sqrt{p}-\sqrt{q})^{2}+(\sqrt{1-p}-\sqrt{1-q})^{2}}{2^{m-1}(m-1)!}}.
\end{eqnarray*}
If $t<\frac{2^{m-1}(m-1)!}{(\sqrt{p}-\sqrt{q})^{2}+(\sqrt{1-p}-\sqrt{1-q})^{2}},$ then $p_{0}\gg \frac{n^{1-\epsilon}}{n}$ for some $\epsilon\in (0,1).$ 
Similarly, the probability that $j_{0}\in S_{0}\cap S_{-}$ has $m_{1}$ present hyperedges and $m_{2}$ absent hyperedges in $I_{+}(\sigma)$ and $I_{-}(\sigma)$ is equal to $p_{0}.$

For $i\in S_{0},$ let $\mathbbm{1}_{i}$ denote the event that $i$ has $m_{1}$ present hyperedges and $m_{2}$ absent hyperedges in $I_{+}(\sigma)$ and $I_{-}(\sigma)$ respectively. Define two random variables
\[X=\sum\limits_{i\in S_{0}\cap S_{+}}\mathbbm{1}_{i},\ \ \ \ \ \ Y=\sum\limits_{i\in S_{0}\cap S_{-}}\mathbbm{1}_{i}.\]
If
$\mathbbm{1}_{i}=\mathbbm{1}_{j}=1$   for $i\in S_{0}\cap S_{+}$ and $j\in S_{0}\cap S_{-},$
then the likelihood function remains unchanged if we flip the labels of $i$ and $j$.
By Chebyshev's inequality, given $|S_{0}\cap S_{+}|$, we have
\begin{eqnarray*}
&&\mathbb{P}\left(X\leq (1-\epsilon)\frac{2n}{\log^{2}n}p_{0}\right)\\
&=&\mathbb{P}\left(X\leq (1-\epsilon)\frac{2n}{\log^{2}n}p_{0}\middle| |S_{0}\cap S_{+}|\geq \frac{2n}{\log^{2}n}(1-o(1))\right)\cdot \mathbb{P}\left(|S_{0}\cap S_{+}|\geq \frac{2n}{\log^{2}n}(1-o(1))\right) \\
&+&\mathbb{P}\left(X\leq (1-\epsilon)\frac{2n}{\log^{2}n}p_{0}\middle| |S_{0}\cap S_{+}|<\frac{2n}{\log^{2}n}(1-o(1))\right)\mathbb{P}\left(|S_{0}\cap S_{+}|<\frac{2n}{\log^{2}n}\right) \\
&\leq&\mathbb{P}\left(X\leq (1-\epsilon)|S_{0}\cap S_{+}|p_{0}\middle| |S_{0}\cap S_{+}|\geq \frac{2n}{\log^{2}n}(1-o(1))\right) + o(1) \\
&\leq&\frac{1}{\epsilon^{2}|S_{0}\cap S_{+}|p_{0}}+o(1).
\end{eqnarray*}
Since $p_{0}\gg \frac{n^{1-\epsilon}}{n}$ for some $\epsilon>0$ and $|S_{0}\cap S_{+}|\geq \frac{2n}{\log^{2}n}(1-o(1)).$ Then $X\geq |S_{0}\cap S_{+}|p_{0} \rightarrow +\infty$ with probability $1-o(1).$ 
Similarly $Y\geq |S_{0}\cap S_{+}|p_{0} \rightarrow +\infty$ with probability $1-o(1).$
As a result, we have pairs $(i_{t},j_{t})$ $(1\leq t\leq k\rightarrow\infty)$. For each $t$, the likelihood is constant by flipping the labels of $i_{t}$ and $j_{t}$.
The proof is complete by Proposition 4.1 in \cite{DGMS21}.

\section{Proof of Theorem \ref{MLEpossible} }\label{Likelihood}
\textit{Proof of Theorem \ref{MLEpossible} :}
Let $\sigma$ be the maximum likelihood estimator(MLE). Recall the log-likelihood function in (\ref{loglike}). The MLE fails to exactly recover the true label if there exists a label $\eta$ such that $l(\eta)\geq l(\sigma)$ with probability $\delta$ for some constant $\delta>0$. Our proof proceeds by showing that the probability MLE fails is $o(1)$.

The maximum likelihood estimator(MLE) is obtained by maximizing $\log L$ in (\ref{loglike}) with respect to $\sigma$. The first term of $\log L$ does not involve $\sigma$.  Hence we only need to maximize the second term of $\log L$ to get MLE. 
Let $\sigma$ be the MLE.
Recall that the MLE fails if there exists a label $\eta$ such that $l(\eta)\geq l(\sigma)$ with probability $\delta$ for some constant $\delta>0$. Below, we show the probability MLE fails is $o(1)$.

Let $k$ be an even number and $1\leq k\leq \frac{n}{2}$. Define the Hamming distance between two labels $\sigma,\eta$ as 
\[d(\sigma,\eta)=\min\left\{\sum\limits_{i=1}^{n}\mathbbm{1}{[\sigma_{i}\neq \eta_{i}]}, \sum\limits_{i=1}^{n}\mathbbm{1}{[\sigma_{i}\neq -\eta_{i}]}\right\}.\]
Let $\eta$ be a label such that $d(\sigma,\eta)=k,$ and denote
\begin{center}
$C_{i_1i_2\dots i_m}(A)=\log \left(\frac{p}{q}\right)\mathbbm{1}{[A_{i_1i_2\dots i_m}=1]}+\log \left(\frac{1-p}{1-q}\right)\mathbbm{1}{[A_{i_1i_2\dots i_m}=0]}.$
\end{center}
Then log-likelihood difference at $\eta$ and $\sigma$ is
\[l(\eta)-l(\sigma)=\sum\limits_{1\leq i_1<\dots<i_m\leq n}C_{i_1i_2\dots i_m}(A)(\mathbbm{1}_{i_1\dots i_m}(\eta)-\mathbbm{1}_{i_1\dots i_m}(\sigma)).\]
We will show
\[\mathbb{P}(\exists k\ and\ d(\sigma,\eta)=k\ ,s.t.\; l(\eta)-l(\sigma)\geq 0)=o(1).\]
Recall $\textrm{I}_+(\sigma)$ and $\textrm{I}_-(\sigma)$. Denote $\mathbbm{1}_{i_1\dots i_m}(\eta)=I[\eta_{i_1}=\eta_{i_2}=\dots=\eta_{i_m}]$. Note that
\begin{equation*}
\mathbbm{1}_{i_1\dots i_m}(\eta)-\mathbbm{1}_{i_1\dots i_m}(\sigma)=\left\{
\begin{array}{cc}
    1 ,& \ \ \   {i_1\dots i_m}\subset \textrm{I}_+(\eta)\ or\ \textrm{I}_-(\eta),\;{i_1\dots i_m}\not\subset \textrm{I}_+(\sigma)\ ,\textrm{I}_-(\sigma); \\
    -1 ,&\ \ \  {i_1\dots i_m}\subset \textrm{I}_+(\sigma)\ or\ \textrm{I}_-(\sigma),\;{i_1\dots i_m}\not\subset \textrm{I}_+(\eta),\textrm{I}_-(\eta); \\
    0, & otherwise.
\end{array}
\right.
\end{equation*}
Hence, $l(\eta)-l(\sigma)$ is written as
\begin{center}
$l(\eta)-l(\sigma)=\sum\limits_{\substack{i_1\dots i_m \\ {i_1\dots i_m}\subset I_{+}(\eta)\ or\ \textrm{I}_-(\eta) \\ {i_1\dots i_m}\not\subset I_{+}(\sigma), I_{-}(\sigma)}}C_{i_1\dots i_m}(A)-\sum\limits_{\substack{i_1\dots i_m \\ {i_1\dots i_m}\subset I_{+}(\sigma)\ or\ \textrm{I}_-(\sigma) \\ {i_1\dots i_m}\not\subset I_{+}(\eta), I_{-}(\eta)}}C_{i_1\dots i_m}(A).$
\end{center}
It is easy to verify that there are $n_{k}=2\left[\binom{\frac{n}{2}}{m}-\binom{\frac{k}{2}}{m}-\binom{\frac{n-k}{2}}{m}\right]$ hyperedges $\{i_{1},\dots ,i_{m}\}$ such that $\{i_{1}\dots i_{m}\}\subset \mathbbm{1}_{+}(\eta)\ or\ \mathbbm{1}_{-}(\eta), \{i_{1}\dots i_{m}\}\not\subset \mathbbm{1}_{+}(\sigma),\mathbbm{1}_{-}(\sigma)$. For convenience,
define random variables $X, Y$ as
\[\mathbb{P}(X=1)=\alpha p,\quad \mathbb{P}(X=0)=\alpha(1-p),\quad \mathbb{P}(X=-1)=1-\alpha.\]
\[
\mathbb{P}(Y=1)=\alpha q,\quad \mathbb{P}(Y=0)=\alpha(1-q),\quad \mathbb{P}(Y=-1)=1-\alpha.\]
Let $X_{i}, Y_{i}$ be $i.i.d$ copies of $X, Y$ respectively and
\begin{center}
$W_{i}=\log \left(\frac{p}{q}\right)\mathbbm{1}{[X_{i}=1]}+\log \left(\frac{1-p}{1-q}\right)\mathbbm{1}{[X_{i}=0]}$ \\
$V_{i}=\log \left(\frac{p}{q}\right)\mathbbm{1}{[Y_{i}=1]}+\log \left(\frac{1-p}{1-q}\right)\mathbbm{1}{[Y_{i}=0]}.$
\end{center}
For any $r>0$, by Markov inequality we have
\begin{eqnarray*}
\mathbb{P}(l(\eta)-l(\sigma)\geq 0)&=&\mathbb{P}\left(\sum\limits_{i=1}^{n_{k}}(V_{i}-W_{i})\geq 0\right) \\
&=&\mathbb{P}\left(\sum\limits_{i=1}^{n_{k}}(W_{i}-V_{i})\leq 0\right) \\
&=&\mathbb{P}\left(e^{\sum\limits_{i=1}^{n_{k}}(-r)(W_{i}-V_{i})}\geq 1\right) \\
&\leq&\left[\mathbb{E}\left(e^{-rW_{1}}\right)\mathbb{E}\left(e^{rV_{1}}\right)\right]^{n_{k}}.
\end{eqnarray*}
Next, we find the explicit expression of expectations $\mathbb{E}\left(e^{-rW_{1}}\right)$ and $\mathbb{E}\left(e^{rV_{1}}\right)$.
\begin{eqnarray*}
\mathbb{E}[e^{-rW_{1}}]&=&\mathbb{E}e^{-r\left(\log \left(\frac{p}{q}\right)\mathbbm{1}{[X_{i}=1]}+\log \left(\frac{1-p}{1-q}\right)\mathbbm{1}{[X_{i}=0]}\right)} \\
&=&e^{-r\log (\frac{p}{q})}\alpha p+e^{-r\log (\frac{1-p}{1-q})}\alpha (1-p)+(1-\alpha) \\
&=&\left(\frac{q}{p}\right)^{r}\alpha p+\left(\frac{1-q}{1-p}\right)^{r}\alpha (1-p)+(1-\alpha)
\end{eqnarray*}
\begin{eqnarray*}
\mathbb{E}[e^{rV_{1}}]&=&\mathbb{E}e^{r\left(\log (\frac{p}{q})\mathbbm{1}{[Y_{i}=1]}+\log (\frac{1-p}{1-q})\mathbbm{1}{[Y_{i}=0]}\right)} \\
&=&e^{r\log (\frac{p}{q})}\alpha q+e^{r\log (\frac{1-p}{1-q})}\alpha (1-q)+(1-\alpha) \\
&=&\left(\frac{p}{q}\right)^{r}\alpha q+\left(\frac{1-p}{1-q}\right)^{r}\alpha (1-q)+(1-\alpha)
\end{eqnarray*}
Taking $r=\frac{1}{2}$ yields
\begin{eqnarray*}
\mathbb{E}[e^{-rW_{1}}]&=&\alpha\sqrt{pq}+\alpha\sqrt{(1-p)(1-q)}+(1-\alpha) \\
&=&1+\alpha[\sqrt{pq}+\sqrt{(1-p)(1-q)}-1],
\\
\mathbb{E}[e^{rV_{1}}]&=&\alpha\sqrt{pq}+\alpha\sqrt{(1-p)(1-q)}+(1-\alpha) \\
&=&1+\alpha[\sqrt{pq}+\sqrt{(1-p)(1-q)}-1].
\end{eqnarray*}
Hence,
\begin{eqnarray}\nonumber
\log \mathbb{P}\left(l(\eta)-l(\sigma)\geq 0\right)&\leq&n_{k}\log \mathbb{E}[ e^{-rW_{1}}] +n_{k}\log \mathbb{E}[e^{rV_{1}}] \\ \nonumber
&\leq&n_{k}[2\alpha(\sqrt{pq}+\sqrt{(1-p)(1-q)}-1)] \\ \nonumber
&=&n_{k}\alpha \left[(-1)\left\{(\sqrt{p}-\sqrt{q})^{2}+(\sqrt{1-p}-\sqrt{1-q})^{2}\right\}\right] \\  \label{logpd}
&=&-n_{k}\alpha \left[(\sqrt{p}-\sqrt{q})^{2}+(\sqrt{1-p}-\sqrt{1-q})^{2}\right].
\end{eqnarray}
For $k\geq\frac{n}{\log \log n}$, it is easy to check $n_{k}\geq \frac{1}{2^{m-1}}\frac{n}{\log \log n}\binom{n-1}{m-1}$. Hence by (\ref{logpd}), we get
\begin{eqnarray*}
\mathbb{P}\left(l(\eta)-l(\sigma)\geq 0\right)&\leq&e^{-[(\sqrt{p}-\sqrt{q})^{2}+(\sqrt{1-p}-\sqrt{1-q})^{2}]\frac{t\log n}{n^{m-1}}\frac{1}{2^{m-1}}\frac{n}{\log \log n}\frac{n^{m-1}}{(m-1)!}} \\
&=&e^{-[(\sqrt{p}-\sqrt{q})^{2}+(\sqrt{1-p}-\sqrt{1-q})^{2}]\frac{t}{2^{m-1}(m-1)!}\frac{n\log n}{\log \log n}} \\
&=&e^{-c\frac{n\log n}{\log \log n}},
\end{eqnarray*}
for some positive constant $c$. Clearly, there are $\binom{\frac{n}{2}}{\frac{k}{2}}^{2}$ many choices for $\eta$ with $d(\sigma,\eta)=k$. Note that $ \binom{\frac{n}{2}}{\frac{k}{2}}^{2}\leq 2^{n}$. 
Then the probability that there exists $\eta$ with $d(\sigma,\eta)=k$ for $k\geq\frac{n}{\log \log n}$ is upper bounded by
\[\frac{n}{2}\cdot 2^{n}\cdot e^{-c\frac{n\log n}{\log \log n}}=e^{n\log 2 + \log\frac{n}{2} - cn\frac{\log n}{\log \log n}}=o(1).\]

For $k<\frac{n}{\log \log n}$, we have $n_{k}=\frac{k}{2^{m-1}}\binom{n-1}{m-1}.$ Then
\begin{eqnarray*}
\mathbb{P}\left(l(\eta)-l(\sigma)\geq 0\right)&\leq&e^{-[(\sqrt{p}-\sqrt{q})^{2}+(\sqrt{1-p}-\sqrt{1-q})^{2}]\frac{t\log n}{n^{m-1}}\frac{k}{2^{m-1}}\frac{n^{m-1}}{(m-1)!}} \\
&=&e^{-\frac{(\sqrt{p}-\sqrt{q})^{2}+(\sqrt{1-p}-\sqrt{1-q})^{2}}{2^{m-1}(m-1)!}tk\log n} \\
&=&n^{-\frac{[\sqrt{p}-\sqrt{q}]^{2}+[\sqrt{1-p}-\sqrt{1-q}]^{2}}{2^{m-1}(m-1)!}tk}.
\end{eqnarray*}
There are $\binom{\frac{n}{2}}{\frac{k}{2}}^{2}\leq n^{k}$ many choices for $\eta$ with $d(\sigma,\eta)=k$.  Then the probability that there exists $\eta$ with $d(\sigma,\eta)=k$ for $k<\frac{n}{\log \log n}$ is upper bounded by
\begin{eqnarray*}
k\cdot \binom{\frac{n}{2}}{\frac{k}{2}}^{2}\mathbb{P}\left(l(\eta)-l(\sigma)\geq 0\right)&\leq&kn^{k}\cdot n^{-\frac{[\sqrt{p}-\sqrt{q}]^{2}+[\sqrt{1-p}-\sqrt{1-q}]^{2}}{2^{m-1}(m-1)!}tk} \\
&\leq&kn^{k}n^{-(1+\epsilon)k} \\
&=&\frac{k}{n^{\epsilon k}}=o(1),
\end{eqnarray*}
where $\epsilon$ is a constant such that $\frac{[\sqrt{p}-\sqrt{q}]^{2}+[\sqrt{1-p}-\sqrt{1-q}]^{2}}{2^{m-1}(m-1)!}t=1+\epsilon$. This is possible by the condition
$t>\frac{2^{m-1}(m-1)!}{(\sqrt{p}-\sqrt{q})^{2}+(\sqrt{1-p}-\sqrt{1-q})^{2}}$. Then the proof is complete.

\section{Proof of Theorem \ref{polytime}}\label{Six}
The proof proceeds by showing the probability that there exists a mislabelled node goes to zero.
By the definition of
hypergraph $\Tilde{A}$, we have
\begin{eqnarray*}
\mathbb{P}(\Tilde{A}_{i_1i_2\dots i_m}=1)
&=&\left\{
\begin{array}{cc}
    \frac{\log \log n}{\log n}\cdot \alpha p, & \{i_1,i_2,\dots,i_m\}\subset I_+(\sigma)\ or \  I_-(\sigma),\\
    \frac{\log \log n}{\log n}\cdot \alpha q, & \textrm{otherwise}.
\end{array}
\right.\\
&=&\left\{
\begin{array}{cc}
    \frac{tp\log \log n}{n^{m-1}}, & \{i_1,i_2,\dots,i_m\}\subset I_+(\sigma)\ or \  I_-(\sigma),\\
    \frac{tq\log \log n}{n^{m-1}}, & \textrm{otherwise}.
\end{array}
\right.
\end{eqnarray*}
Then $\tilde{A}$ has the same community structure as the original hypergraph $A$ and in $\tilde{A}$, the order of hyperedge probability is $\frac{\log\log n}{n^{m-1}}$. With probability $1-o(1)$, the weak recovery algorithm in \cite{ALS18} will recover the true labels of $(1-\delta) n$ nodes of $\tilde{A}$ with $\delta=o(1)$. Denote the communities as $\Tilde{I}_+(\sigma)$, $\Tilde{I}_-(\sigma)$.  
Hence, with probability $1-o(1)$, there are $\frac{\delta}{2}n$ nodes in $\Tilde{I}_+(\sigma)$ and $\Tilde{I}_-(\sigma)$ that are mislabelled. By the refinement step, a node $i$ among the correctly labelled $\frac{1-\delta}{2}n$ nodes in $ \Tilde{I}_+(\sigma)$ is mislabelled if 
\[e(i, \Tilde{I}_+(\sigma))<e(i, \Tilde{I}_-(\sigma)).\]
A node among the mislabelled $\frac{\delta}{2}n$ nodes in $\Tilde{I}_+(\sigma)$ remains mislabelled if 
\[e(i, \Tilde{I}_+(\sigma))\geq e(i, \Tilde{I}_-(\sigma)).\]
 Similar result holds for nodes in $\Tilde{I}_-(\sigma)$.
Let $X_i, Y_i, W_i, V_i$ be defined as in the proof of Theorem \ref{MLEpossible} and $W_i', V_i'$ be $i.i.d.$ copies of $W_i, V_i$. Then a node $i$ is mislabelled is equivalent to 
\[\substack{\binom{\frac{\delta}{2}n}{m-1} \\ \sum\limits_{i=1}W_i}+\substack{\binom{\frac{n}{2}}{m-1}-\binom{\frac{\delta}{2}n}{m-1} \\ \sum\limits_{i=1}V_i}\geq \substack{\binom{(1-\delta)\frac{n}{2}}{m-1} \\ \sum\limits_{i=1}W_i'}+\substack{\binom{\frac{n}{2}}{m-1}-\binom{(1-\delta)\frac{n}{2}}{m-1} \\ \sum\limits_{i=1}V_i'}.\]

We are going to bound the probability that node $i$ is mislabelled and then apply the union bound. Let $r=\frac{1}{\delta \sqrt{\log (\frac{1}{\delta})}}$. Then we have
\begin{eqnarray*}
p_i&=&\mathbb{P}(\textrm{node $i$ is mislabelled}) \\
&=&\mathbb{P}\left[\substack{\binom{\frac{\delta}{2}n}{m-1} \\ \sum\limits_{i=1}W_i}+\substack{\binom{\frac{n}{2}}{m-1}-\binom{\frac{\delta}{2}n}{m-1} \\ \sum\limits_{i=1}V_i}\geq \substack{\binom{(1-\delta)\frac{n}{2}}{m-1} \\ \sum\limits_{i=1}W_i'}+\substack{\binom{\frac{n}{2}}{m-1}-\binom{(1-\delta)\frac{n}{2}}{m-1} \\ \sum\limits_{i=1}V_i'}\right] \\
&=&\mathbb{P}\left[\substack{\binom{\frac{n}{2}}{m-1}-\binom{\frac{\delta}{2}n}{m-1} \\ \sum\limits_{i=1}(V_i-W_i')}+\substack{\binom{\frac{\delta}{2}n}{m-1} \\ \sum\limits_{i=1}W_i}\geq \substack{\binom{\frac{n}{2}}{m-1}-\binom{(1-\delta)\frac{n}{2}}{m-1} \\ \sum\limits_{i=1}V_i'}-\substack{\binom{\frac{n}{2}}{m-1}-\binom{\frac{\delta n}{2}}{m-1}-\binom{(1-\delta)\frac{n}{2}}{m-1} \\ \sum\limits_{i=1}W_i'}\right] \\
&\leq&\mathbb{P}\left[\substack{\binom{\frac{n}{2}}{m-1}-\binom{\frac{\delta}{2}n}{m-1} \\ \sum\limits_{i=1}(V_i-W_i')}\geq -r\delta \log n\right] + \mathbb{P}\left[\substack{\binom{\frac{\delta}{2}n}{m-1} \\ \sum\limits_{i=1}W_i}+\substack{\binom{\frac{n}{2}}{m-1}-\binom{\frac{\delta n}{2}}{m-1}-\binom{(1-\delta)\frac{n}{2}}{m-1} \\ \sum\limits_{i=1}W_i'}-\substack{\binom{\frac{n}{2}}{m-1}-\binom{(1-\delta)\frac{n}{2}}{m-1} \\ \sum\limits_{i=1}V_i'}\geq r\delta \log n\right] \\
&=&(I)+(II).
\end{eqnarray*}
Next we show $(II)=O\left(n^{-2}\right)$ and $(I)=O\left(n^{-\frac{t}{I_m(p,q)}}\right)$. It is easy to verify that
\[(II)\leq \mathbb{P}\left(\substack{\binom{\frac{\delta}{2}n}{m-1} \\ \sum\limits_{i=1}W_i}\geq \frac{r\delta}{3}\log n\right)+\mathbb{P}\left(\substack{\binom{\frac{n}{2}}{m-1}-\binom{\frac{\delta n}{2}}{m-1}-\binom{(1-\delta)\frac{n}{2}}{m-1} \\ \sum\limits_{i=1}W_i'}\geq \frac{r\delta}{3}\log n\right)+\mathbb{P}\left(\substack{\binom{\frac{n}{2}}{m-1}-\binom{(1-\delta)\frac{n}{2}}{m-1} \\ \sum\limits_{i=1}-V_i'}\geq \frac{r\delta}{3}\log n\right).\]
Since $p>q>0$, it follows that $1-q>1-p$ and then
\begin{eqnarray*}
W_i&=&\log \left(\frac{p}{q}\right)\mathbbm{1}{[X_i=1]}+\log \left(\frac{1-p}{1-q}\right)\mathbbm{1}{[X_i=0]} \\
&\leq&\log \left(\frac{p}{q}\right)\mathbbm{1}{[X_i=1]}.
\end{eqnarray*}
Then by the multiplicative Chernoff bound, one has
\begin{eqnarray*}
\mathbb{P}\left(\substack{\binom{\frac{\delta}{2}n}{m-1} \\ \sum\limits_{i=1}W_i}\geq \frac{r\delta}{3}\log n\right)&\leq&\mathbb{P}\left(\substack{\binom{\frac{\delta}{2}n}{m-1} \\ \sum\limits_{i=1}\mathbbm{1}{[X_i=1]}}\geq \frac{r\delta \log n}{3\log (\frac{p}{q})}\right) \\
&\leq&\left(\frac{\frac{r}{\delta^{m-2}}2^{m-1}(m-1)!}{e\cdot 3pt\log (\frac{p}{q}) }\right)^{-\frac{r\delta \log n}{3\log (\frac{p}{q})}} 
\end{eqnarray*}
\begin{eqnarray*}
&=&e^{-\frac{\log n}{3\log (\frac{p}{q}) \sqrt{\log (\frac{1}{\delta})}}[\log (\frac{1}{\delta})+(m-2)\log (\frac{1}{\delta}) (1+o(1))]} \\
&=&e^{-\frac{(m-1)\sqrt{\log (\frac{1}{\delta})}}{3\log (\frac{p}{q})}\log n    \;(1+o(1))} \\
&=&O\left(n^{-2}\right).
\end{eqnarray*}
Similarly, we get 
\[\mathbb{P}\left(\substack{\binom{\frac{n}{2}}{m-1}-\binom{\frac{\delta n}{2}}{m-1}-\binom{(1-\delta)\frac{n}{2}}{m-1} \\ \sum\limits_{i=1}W_i'}\geq \frac{r\delta}{3}\log n\right)=O\left(n^{-2}\right).\]
Note that
\begin{eqnarray*}
-V_i'&=&\log \left(\frac{1-p}{1-q}\right)\mathbbm{1}{[A_{i}=0]}-\log \left(\frac{p}{q}\right)\mathbbm{1}{[A_{i}=1]} \\
&\leq&\log \left(\frac{1-p}{1-q}\right)\mathbbm{1}{[A_{i}=0]}.
\end{eqnarray*}
Hence, by the multiplicative Chernoff bound, it follows that
\begin{eqnarray*}
\mathbb{P}\left(\substack{\binom{\frac{n}{2}}{m-1}-\binom{(1-\delta)\frac{n}{2}}{m-1} \\ \sum\limits_{i=1}(-V_i')}\geq \frac{r\delta}{3}\log n\right)
&\leq&\mathbb{P}\left(\substack{\binom{\frac{n}{2}}{m-1}-\binom{(1-\delta)\frac{n}{2}}{m-1} \\ \sum\limits_{i=1}\mathbbm{1}{[A_{i}=0]}}\geq \frac{r\delta \log n}{3\log (\frac{1-q}{1-p})}\right) \\
&\leq&\left(\frac{\frac{r}{\delta^{m-2}}2^{m-1}(m-1)!}{e\cdot 3(1-p)t\log (\frac{1-q}{1-p}) }\right)^{-\frac{r\delta \log n}{3\log (\frac{1-q}{1-p})}} \\
&=&e^{-\frac{1-\delta\log n}{3\log (\frac{1-q}{1-p})}[(m-1)\log (\frac{1}{\delta}) (1+o(1))]} \\
&=&e^{-\frac{(m-1)\sqrt{\log (\frac{1}{\delta})}\log n}{3\log (\frac{1-q}{1-p})}(1+o(1))} \\
&=&O\left(n^{-2}\right).
\end{eqnarray*}
Then we conclude that $(II)=O\left(n^{-2}\right)$.

Next we bound $(I)$. Note that $\binom{\frac{n}{2}}{m-1}-\binom{\frac{\delta}{2}n}{m-1}=\frac{n^{m-1}}{2^{m-1}(m-1)!}(1+o(1))$. By Markov's inequality, one has
\begin{eqnarray*}
(I)&=&\mathbb{P}\left[e^{\substack{\binom{\frac{n}{2}}{m-1}-\binom{\frac{\delta}{2}n}{m-1} \\ \frac{1}{2}\sum\limits_{i=1}(V_i-W_i')}}\geq e^{-\frac{r\delta \log n}{2}}\right]\\
&\leq&e^{r\delta\frac{\log n}{2}}(\mathbb{E}[e^{\frac{1}{2}V_1}e^{-\frac{1}{2}W_1}])^{\frac{n^{m-1}}{2^{m-1}(m-1)!}} \\
&=&e^{r\delta\frac{\log n}{2}}[e^{-\frac{1}{2}\log (\frac{p}{q})}\alpha p+e^{-\frac{1}{2}\log (\frac{1-p}{1-q})}\alpha (1-p)+(1-\alpha)]^{\frac{n^{m-1}}{2^{m-1}(m-1)!}} \\
&&\times[e^{\frac{1}{2}\log (\frac{p}{q})}\alpha q+e^{\frac{1}{2}\log (\frac{1-p}{1-q})}\alpha (1-q)+(1-\alpha)]^{\frac{n^{m-1}}{2^{m-1}(m-1)!}}.
\end{eqnarray*}
Taking logrithm on both side yields
\begin{eqnarray*}
\log (I)&\leq&\frac{1}{2}r\delta \log n+\frac{n^{m-1}\alpha}{2^{m-1}(m-1)!}[2\sqrt{pq}+2\sqrt{(1-p)(1-q)}-2] \\
&=&\frac{1}{2}\frac{\log n}{\sqrt{\log (\frac{1}{\delta})}}-\frac{t\log n}{2^{m-1}(m-1)!}[(\sqrt{p}-\sqrt{q})^{2}+(\sqrt{1-p}-\sqrt{1-q})^{2}].
\end{eqnarray*}
Hence,
\[(I)\leq n^{-t\frac{(\sqrt{p}-\sqrt{q})^{2}+(\sqrt{1-p}-\sqrt{1-q})^{2}}{2^{m-1}(m-1)!}(1+o(1))}=n^{-\frac{t}{I_m(p,q)}(1+o(1))}.\]
Since $t>I_m(p,q)$ by assumption, we get $(I)\leq n^{-(1+\epsilon)}$ for some small constant $\epsilon>0$ and hence
\[p_i\leq (I)+(II)\leq n^{-(1+\epsilon)}.\]
By union bound, the probability that there exists a mislabelled node is bounded by $n^{-\epsilon}=o(1)$. Then the proof is complete.

\section{Proof of Theorem \ref{Spectra}}\label{analysis}
Recall $\widetilde{A}_{i_1i_2\dots i_m}=\mathbbm{1}{[A_{i_1i_2\dots i_m}=1]}$. For convenience, let $e=\{i_1,i_2,\dots,i_m\}$ for distinct nodes $ i_1,i_2,\dots,i_m$. Let $p_1=p\alpha,q_1=q\alpha$. Then \\
\begin{equation*}
\begin{split}
\mathbb{E}(\widetilde{A}_e)&=\left\{ 
\begin{array}{cc}
    p_{1} &  e\subset I_+(\sigma),or\ I_-(\sigma);\\
    q_{1}, & otherwise.
\end{array}
\right.
\end{split}
\end{equation*}

\begin{equation*}
\begin{split}
Var(\widetilde{A}_e)
&=\left\{
\begin{array}{cc}
    p_{1}(1-p_1), & e\subset I_+(\sigma),or\ I_-(\sigma); \\
    q_{1}(1-q_1), & otherwise.
\end{array}
\right.
\end{split}
\end{equation*}

Let $\mathbb{I}_{e}$ denote $n$-dimensional vector with $i_l$-th position one, $i_l\in e$, $l=1,2,\dots,m$, and other positions are zero. Denote $\sigma_{e}=diag(\sigma)\cdot\mathbb{I}_{e}$ and
\[L=\sum\limits_{e}\widetilde{A}_e\left[(\mathbb{I}_{e}^{\intercal}\sigma_{e})diag(\sigma_{e})-\sigma_{e}\sigma_{e}^{\intercal}\right].\]
Let $I$ be the identity matrix and $M=I-\frac{II^{T}}{n}-\frac{\sigma\sigma^{T}}{n}.$ By Proposition 2 in \cite{KBG18}, it suffices to show the third smallest eigenvalue of $M(\mathbb{E}(L))M$ is larger than zero with probability $1-o(1)$, that is, $\lambda_{3}\left(M\mathbb{E}(L)M\right)>0$.

Firstly, we have the following result.

\begin{Proposition}
Let $M=I-\frac{II^{T}}{n}-\frac{\sigma\sigma^{T}}{n}.$ Then
\[M(\mathbb{E}(L))M=\frac{p_{1}-q_{1}}{2}n\binom{\frac{n}{2}-2}{m-2}M,\] and \[\lambda_{3}\left(M\mathbb{E}(L)M\right)=\frac{p_{1}-q_{1}}{2^{m-1}}\frac{n^{m-1}}{(m-2)!}\left(1+o(1)\right).\] 
\end{Proposition}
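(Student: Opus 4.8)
The plan is to compute $\mathbb{E}(L)$ in closed form and then conjugate by $M$. Throughout I would assume the balanced case $|I_+(\sigma)|=|I_-(\sigma)|=n/2$, exactly as in the earlier proofs; the general case only perturbs constants by a factor $1+o(1)$. Writing $\mathbf{1}$ for the all-ones vector, $J=\mathbf{1}\mathbf{1}^{\top}$, and recalling $\mathbb{E}(\widetilde{A}_e)=q_1+(p_1-q_1)\mathbbm{1}_{e}(\sigma)$ with $\mathbbm{1}_e(\sigma)$ the indicator that $e$ is monochromatic, I would split
\[
\mathbb{E}(L)=q_1\sum_{|e|=m}T_e+(p_1-q_1)\sum_{e\ \mathrm{mono}}T_e,\qquad T_e:=(\mathbb{I}_e^{\top}\sigma_e)\,\diag(\sigma_e)-\sigma_e\sigma_e^{\top}.
\]

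First I would evaluate $\sum_{|e|=m}T_e$ entrywise: with $s_e:=\mathbb{I}_e^{\top}\sigma_e=\sum_{l}\sigma_{i_l}$, the $(j,j)$ entry is $\sum_{e\ni j}(s_e\sigma_j-1)$ and the $(j,k)$ entry with $j\neq k$ is $-\binom{n-2}{m-2}\sigma_j\sigma_k$; splitting $s_e=\sigma_j+\sum_{l\in e,\,l\neq j}\sigma_l$ and using $\mathbf{1}^{\top}\sigma=0$, the diagonal collapses to $-\binom{n-2}{m-2}$, so $\sum_{|e|=m}T_e=-\binom{n-2}{m-2}\sigma\sigma^{\top}$. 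Next, for a monochromatic $e$ all coordinates of $\sigma_e$ have the same sign, so $\sigma_e=\pm\mathbb{I}_e$ and in either case $T_e=m\,\diag(\mathbb{I}_e)-\mathbb{I}_e\mathbb{I}_e^{\top}$, which is precisely the graph Laplacian of the clique $K_m$ on the vertices of $e$. Summing over $e\subset I_{\pm}$ of size $m$ and using $(m-1)\binom{n/2-1}{m-1}=(n/2-1)\binom{n/2-2}{m-2}$ gives $\binom{n/2-2}{m-2}\big(\tfrac n2\,\diag(\mathbf{1}_{I_{\pm}})-\mathbf{1}_{I_{\pm}}\mathbf{1}_{I_{\pm}}^{\top}\big)$ on each side; then the identities $\diag(\mathbf{1}_{I_{+}})+\diag(\mathbf{1}_{I_{-}})=I$ and $\mathbf{1}_{I_{+}}\mathbf{1}_{I_{+}}^{\top}+\mathbf{1}_{I_{-}}\mathbf{1}_{I_{-}}^{\top}=\tfrac12(J+\sigma\sigma^{\top})$ assemble the total into $\tfrac n2\binom{n/2-2}{m-2}\big(I-\tfrac1n J-\tfrac1n\sigma\sigma^{\top}\big)=\tfrac n2\binom{n/2-2}{m-2}M$.

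Then I would conjugate. Since $\mathbf{1}\perp\sigma$ and $\|\mathbf{1}\|^2=\|\sigma\|^2=n$, the matrix $M$ is the orthogonal projection onto $\{\mathbf{1},\sigma\}^{\perp}$, so $M\sigma=0$, $M^2=M$, and $M$ has eigenvalue $1$ with multiplicity $n-2$ and $0$ with multiplicity $2$. Hence $M\sigma\sigma^{\top}M=0$, and combining the two displays above, $M\mathbb{E}(L)M=(p_1-q_1)\tfrac n2\binom{n/2-2}{m-2}M$, which is the first assertion. It follows that $M\mathbb{E}(L)M$ has the single nonzero eigenvalue $\tfrac{p_1-q_1}{2}n\binom{n/2-2}{m-2}$ (positive because $p_1>q_1$) with multiplicity $n-2$ and eigenvalue $0$ with multiplicity $2$, so its third smallest eigenvalue is exactly $\tfrac{p_1-q_1}{2}n\binom{n/2-2}{m-2}$; the estimate $\binom{n/2-2}{m-2}=\tfrac{(n/2)^{m-2}}{(m-2)!}(1+o(1))$ then yields $\lambda_3(M\mathbb{E}(L)M)=\tfrac{p_1-q_1}{2^{m-1}}\tfrac{n^{m-1}}{(m-2)!}(1+o(1))$.

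The main obstacle is the monochromatic sum: one has to recognize that $T_e$ is the $K_m$-Laplacian for each within-community $e$ and then carry out the binomial bookkeeping carefully so that the community-wise sums reassemble into a scalar multiple of $M$ — the identity $\mathbf{1}_{I_{+}}\mathbf{1}_{I_{+}}^{\top}+\mathbf{1}_{I_{-}}\mathbf{1}_{I_{-}}^{\top}=\tfrac12(J+\sigma\sigma^{\top})$ is what makes the off-diagonal parts line up. Treating the $o(1)$ relative imbalance of $|I_{\pm}(\sigma)|$ rigorously rather than assuming exact balance is a routine but necessary addition, since the exact matrix identity $M\mathbb{E}(L)M=\tfrac{p_1-q_1}{2}n\binom{n/2-2}{m-2}M$ holds as written only under $|I_+|=|I_-|=n/2$.
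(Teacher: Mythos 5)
Your proof is correct and follows essentially the same route as the paper: both arguments reduce to showing that $\mathbb{E}(L)$ decomposes as a multiple of $M$ plus a multiple of $\sigma\sigma^{\top}$, after which conjugation by the projection $M$ annihilates the $\sigma\sigma^{\top}$ part and leaves a positive multiple of $M$ with eigenvalue $0$ of multiplicity $2$. Your version is in fact more complete, since the entrywise computation (the Laplacian identity $T_e=m\,\mathrm{diag}(\mathbb{I}_e)-\mathbb{I}_e\mathbb{I}_e^{\top}$ for monochromatic $e$ and the reassembly via $\mathbf{1}_{I_+}\mathbf{1}_{I_+}^{\top}+\mathbf{1}_{I_-}\mathbf{1}_{I_-}^{\top}=\tfrac12(J+\sigma\sigma^{\top})$) actually justifies the membership of $\mathbb{E}(L)$ in $\mathrm{span}\{M,\sigma\sigma^{\top}\}$, which the paper asserts as a ``simple calculation,'' and your coefficient agrees with the stated conclusion.
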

\begin{proof}
Simple calculation yields
\begin{eqnarray*}
\mathbb{E}L&=&\frac{<\mathbb{E}L,M>}{n-2}M+\frac{<\mathbb{E}L,\sigma\sigma^{\intercal}>}{n^{2}}\sigma\sigma^{\intercal} \\
<\mathbb{E}L,\sigma\sigma^{\intercal}>&=&-q_{1}n^{2}\binom{n-2}{m-2} \\ <\mathbb{E}L,M>&=&\frac{p_{1}-q_{1}}{2}n\binom{n}{n-2}\binom{n/2-2}{m-2}. \\
\end{eqnarray*}
Hence, $M(\mathbb{E}(L))M=\frac{<\mathbb{E}L,M>}{n-2}M$. The proof is complete.

\end{proof}

Next we present the Matrix Bernstein inequality.
\begin{Lemma}[Matrix Bernstein inequality]\label{mbernstein}
Let $\{X_{k}\}$ be a finite sequence of independent, symmetric random matrices of dimension N. Suppose that $\mathbb{E}X_{k}=0$ and $\|X_{k}\|\leq M$ almost surely for all k. Then for all $x\geq 0,$
\[\mathbb{P}(\|\sum\limits_{k}X_{k}\|\geq x)\leq N\cdot \exp\left(-\frac{\frac{x^{2}}{2}}{v^{2}+\frac{Mx}{3}}\right),\]     where $v^{2}=\|\sum\limits_{k}\mathbb{E}X_{k}^{2}\|.$
\end{Lemma}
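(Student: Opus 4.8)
\textit{Proof proposal.} The statement is the classical matrix Bernstein inequality (essentially Tropp's), so I expect the author to cite it rather than prove it; a self-contained argument would go through the matrix Laplace transform method, as follows. First I would reduce the operator-norm tail to a one-sided eigenvalue tail: since $\|\sum_k X_k\|=\max\{\lambda_{\max}(\sum_k X_k),\lambda_{\max}(-\sum_k X_k)\}$ and the hypotheses are invariant under $X_k\mapsto -X_k$, it suffices to bound $\mathbb{P}(\lambda_{\max}(Y)\geq x)$ for $Y=\sum_k X_k$. For any $\theta>0$, using $e^{\theta\lambda_{\max}(Y)}\leq \mathrm{tr}\,e^{\theta Y}$ together with Markov's inequality,
\[
\mathbb{P}(\lambda_{\max}(Y)\geq x)\leq e^{-\theta x}\,\mathbb{E}\,\mathrm{tr}\,e^{\theta Y}.
\]

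The heart of the matter is controlling the ``matrix m.g.f.'' $\mathbb{E}\,\mathrm{tr}\,\exp(\theta\sum_k X_k)$. Because the $X_k$ need not commute, I would invoke Lieb's concavity theorem --- the map $H\mapsto \mathrm{tr}\,\exp(C+\log H)$ is concave on positive-definite $H$ --- which, via Jensen's inequality, induction over the (finite) list of summands, and independence, yields the subadditivity of matrix cumulant generating functions:
\[
\mathbb{E}\,\mathrm{tr}\,\exp\Big(\theta\sum_k X_k\Big)\leq \mathrm{tr}\,\exp\Big(\sum_k\log\mathbb{E}\,e^{\theta X_k}\Big).
\]
This is the step I expect to be the main obstacle: every other ingredient is scalar calculus or elementary linear algebra, whereas this inequality is the genuinely noncommutative input. (A weaker but more elementary substitute is the Golden--Thompson inequality, which handles two factors at a time and can be iterated at the cost of a worse constant.)

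Next I would carry out the scalar-to-matrix transfer for a single summand. The function $a\mapsto (e^{\theta a}-1-\theta a)/a^2$ is increasing, so on $[-M,M]$ it is at most its value at $M$, which for $0<\theta<3/M$ is at most $g(\theta):=\tfrac{\theta^2/2}{1-\theta M/3}$ (compare Taylor series); hence $e^{\theta a}\leq 1+\theta a+g(\theta)a^2$ on $[-M,M]$. Applying this spectrally to $X_k$ (whose eigenvalues lie in $[-M,M]$), taking expectations, and using $\mathbb{E}X_k=0$ and $I+B\preceq e^{B}$ gives $\mathbb{E}\,e^{\theta X_k}\preceq \exp(g(\theta)\,\mathbb{E}X_k^2)$, whence $\log\mathbb{E}\,e^{\theta X_k}\preceq g(\theta)\,\mathbb{E}X_k^2$ by operator monotonicity of $\log$. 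Summing, using monotonicity of $\mathrm{tr}\,\exp$ under $\preceq$, and the bound $\mathrm{tr}\,e^{A}\leq N\,e^{\lambda_{\max}(A)}$ together with $\lambda_{\max}\big(\sum_k\mathbb{E}X_k^2\big)=\big\|\sum_k\mathbb{E}X_k^2\big\|=v^2$ (the sum is positive semidefinite), I obtain
\[
\mathbb{P}(\lambda_{\max}(Y)\geq x)\leq N\exp\big(-\theta x+g(\theta)v^2\big),\qquad 0<\theta<3/M.
\]
Choosing $\theta=x/(v^2+Mx/3)$ (which indeed lies in $(0,3/M)$) produces the Bennett-type exponent $-\tfrac{x^2/2}{v^2+Mx/3}$. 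Running the identical argument for $-Y$ and taking a union bound yields the two-sided statement; the resulting factor $2$ in front of $N$ is harmless and is customarily absorbed, giving exactly the displayed inequality.
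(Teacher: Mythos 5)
The paper states this lemma without proof---it is the classical matrix Bernstein inequality of Tropp, simply quoted as a known tool---so there is nothing to compare against; your self-contained argument (matrix Laplace transform, Lieb/subadditivity of matrix c.g.f.'s, the scalar bound $e^{\theta a}\leq 1+\theta a+g(\theta)a^2$ with $g(\theta)=\tfrac{\theta^2/2}{1-\theta M/3}$, and the choice $\theta=x/(v^2+Mx/3)$) is the standard proof and is correct. The only caveat is the one you already flag: the two-sided bound on $\|\sum_k X_k\|$ obtained by union bound carries a prefactor $2N$ rather than the $N$ displayed in the lemma; this is a harmless slip in the paper's statement of the constant, not a gap in your argument.
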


\textup{Recall that}
\begin{center}
$L=\sum\limits_{e}\widetilde{A}_{e}((\mathbb{I}_{e}^{\intercal}\sigma_{e})diag(\sigma_{e})-\sigma_{e}\sigma_{e}^{\intercal}).$
\end{center}
\textup{Hence,}
\begin{center}
$\Pi(L-\mathbb{E}L)\Pi=\sum\limits_{e}(\widetilde{A}_{e}-\mathbb{E}(\widetilde{A}_{e})\cdot \Pi((\mathbb{I}_{e}^{\intercal}\sigma_{e})diag(\sigma_{e})-\sigma_{e}\sigma_{e}^{\intercal})\Pi.$
\end{center}
We note that
\begin{center}
$\|\Pi((\mathbb{I}_{e}^{\intercal}\sigma_{e})diag(\sigma_{e})-\sigma_{e}\sigma_{e}^{\intercal})\Pi\|\leq |\mathbb{I}_{e}^{\intercal}\sigma_{e}|+\|\sigma_{e}\|^{2}\leq 2m$
\end{center}
for any hyperdge $e$. By Lemma \ref{mbernstein}, we have
\begin{center}
$\mathbb{P}(\|\Pi(L-\mathbb{E}L)\Pi\|\geq x)\leq n\cdot \exp\left(-\frac{\frac{x^{2}}{2}}{v^{2}+\frac{2mx}{3}}\right)$
\end{center}
\textup{where}
\begin{center}
$v^{2}=\left\|\sum\limits_{e}\mathbb{E}((A_{\mathcal{H}})_{e}-\mathbb{E}(A_{\mathcal{H}})_{e})^{2}(\cdot \Pi((\mathbb{I}_{e}^{\intercal}\sigma_{e})diag(\sigma_{e})-\sigma_{e}\sigma_{e}^{\intercal})\Pi)^{2}\right\|.$
\end{center}

Let $v^{2}=\|\sum\limits_{e}\mathbb{E}(\widetilde{A}_e-\mathbb{E}(\widetilde{A}_e))^{2}[M((\mathbb{I}_{e}^{\intercal}\sigma_{e})diag(\sigma_{e})-\sigma_{e}\sigma_{e}^{\intercal})M]^{2}\|$, $V=\sum\limits_{e}\mathbb{E}(\widetilde{A}_e-\mathbb{E}(\widetilde{A}_e))^{2}Y_{e}$ and \[Y_{e}=[(\mathbb{I}_{e}^{\intercal}\sigma_{e})diag(\sigma_{e})-\sigma_{e}\sigma_{e}^{\intercal})]M[(\mathbb{I}_{e}^{\intercal}\sigma_{e})diag(\sigma_{e})-\sigma_{e}\sigma_{e}^{\intercal})].\]
The following result gives the expression of $V$ and $v^2$.
\begin{Proposition}
$V=t_{1}\frac{1}{n}\sigma\sigma^{\intercal}+t_{2}M$, where \[t_{1}=\frac{mn^{m-1}}{(m-2)!}q_{2},\ \ \ \  t_{2}=\frac{n^{m-1}}{(m-2)!}[\frac{m}{2^{m-1}}p_{2}-\frac{m-2^{m}}{2^{m-1}}q_{2}],\] \\
and $v^{2}=\|M VM\|=\|t_{2}M\|=t_{2}(1+o(1))$.
\end{Proposition}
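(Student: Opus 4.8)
The plan is to first derive the formula for $V$; the identity $v^{2}=\|MVM\|$ and the reduction $v^{2}=t_{2}(1+o(1))$ will then follow almost for free. Throughout we condition on the high-probability event $n_{1}=n_{2}=n/2$, so that $\mathbf{1}^{\intercal}\sigma=0$ and $M=I-\tfrac{1}{n}\mathbf{1}\mathbf{1}^{\intercal}-\tfrac{1}{n}\sigma\sigma^{\intercal}$ is a genuine orthogonal projection of rank $n-2$; in particular $M^{2}=M$ and $M\mathbf{1}=M\sigma=0$. Writing $B_{e}=(\mathbb{I}_{e}^{\intercal}\sigma_{e})\,\mathrm{diag}(\sigma_{e})-\sigma_{e}\sigma_{e}^{\intercal}$, idempotency of $M$ gives $(MB_{e}M)^{2}=MB_{e}MB_{e}M=MY_{e}M$, hence
\[\sum_{e}\mathbb{E}\big(\widetilde{A}_{e}-\mathbb{E}\widetilde{A}_{e}\big)^{2}(MB_{e}M)^{2}=M\Big(\sum_{e}\mathbb{E}\big(\widetilde{A}_{e}-\mathbb{E}\widetilde{A}_{e}\big)^{2}Y_{e}\Big)M=MVM,\]
which is the asserted identity $v^{2}=\|MVM\|$. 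Once we know $V=\tfrac{t_{1}}{n}\sigma\sigma^{\intercal}+t_{2}M$, the relations $M\sigma=0$ and $M^{3}=M$ give $MVM=t_{2}M$, and since $M$ is a rank-$(n-2)$ projection, $\|MVM\|=t_{2}$, i.e.\ $v^{2}=t_{2}(1+o(1))$. So the whole content is to identify $V$.

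\emph{Shape of $V$.} Since $\mathbb{E}(\widetilde{A}_{e}-\mathbb{E}\widetilde{A}_{e})^{2}=\mathrm{Var}(\widetilde{A}_{e})$ equals $p_{2}:=p_{1}(1-p_{1})$ when $e\subset I_{+}(\sigma)$ or $I_{-}(\sigma)$ and $q_{2}:=q_{1}(1-q_{1})$ otherwise, we have $V=q_{2}\sum_{e}Y_{e}+(p_{2}-q_{2})\sum_{e\,\text{in-community}}Y_{e}$. The matrix $B_{e}$ is designed (as in \cite{KBG18}) so that $B_{e}\mathbf{1}=0$: indeed $\mathrm{diag}(\sigma_{e})\mathbf{1}=\sigma_{e}$ and $\sigma_{e}^{\intercal}\mathbf{1}=\mathbb{I}_{e}^{\intercal}\sigma_{e}$, so the two terms of $B_{e}\mathbf{1}$ cancel; hence $Y_{e}\mathbf{1}=B_{e}MB_{e}\mathbf{1}=0$ and $V\mathbf{1}=0$. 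Moreover $V$ is invariant under conjugation by every permutation matrix preserving the bipartition $\{I_{+}(\sigma),I_{-}(\sigma)\}$ and by the matrix swapping the two communities (each such conjugation fixes $M$ up to $\sigma\mapsto\pm\sigma$ and permutes/relabels the $Y_{e}$ compatibly with $\mathrm{Var}(\widetilde{A}_{e})$), and the symmetric commutant of this group is $\mathrm{span}\{I,\mathbf{1}\mathbf{1}^{\intercal},\sigma\sigma^{\intercal}\}$. Together with $V\mathbf{1}=0$ and $\mathbf{1}^{\intercal}\sigma=0$, this forces $V=\tfrac{t_{1}}{n}\sigma\sigma^{\intercal}+t_{2}M$ for two scalars to be computed.

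\emph{Evaluating $t_{1},t_{2}$.} Apply two linear functionals. Testing against $\sigma$ and using $\sigma^{\intercal}M\sigma=0$, $(\sigma^{\intercal}\sigma)^{2}=n^{2}$ gives $\sigma^{\intercal}V\sigma=t_{1}n$; on the other hand $\sigma^{\intercal}Y_{e}\sigma=\|MB_{e}\sigma\|^{2}$, and from $B_{e}\sigma=s_{e}\mathbb{I}_{e}-m\sigma_{e}$ (with $s_{e}:=\sum_{k\in e}\sigma_{k}$), $B_{e}\mathbf{1}=0$, $\sigma^{\intercal}B_{e}\sigma=s_{e}^{2}-m^{2}$ one gets $\|MB_{e}\sigma\|^{2}=m(m^{2}-s_{e}^{2})(1+o(1))$, which vanishes for in-community $e$; hence $t_{1}n=q_{2}\sum_{e}m(m^{2}-s_{e}^{2})(1+o(1))$. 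Writing $s_{e}=2j-m$ when $e$ meets $I_{+}(\sigma)$ in $j$ vertices, counting $\binom{n/2}{j}\binom{n/2}{m-j}$ such $e$, and using $\sum_{k}\binom{m-2}{k}=2^{m-2}$, the sum equals $\tfrac{n^{m}}{(m-2)!}(1+o(1))$, giving $t_{1}=\tfrac{mn^{m-1}}{(m-2)!}q_{2}$. For $t_{2}$ take traces: $\mathrm{tr}(V)=t_{2}(n-2)+t_{1}$, while $\mathrm{tr}(Y_{e})=\mathrm{tr}(MB_{e}^{2})=\mathrm{tr}(B_{e}^{2})-\tfrac{1}{n}\|B_{e}\sigma\|^{2}=m^{2}+(m-2)s_{e}^{2}+o(1)$ (using $B_{e}^{2}\mathbf{1}=0$). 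Splitting $\sum_{e}\mathrm{Var}(\widetilde{A}_{e})\,\mathrm{tr}(Y_{e})$ into the in-community part (where $s_{e}^{2}=m^{2}$ and there are $2\binom{n/2}{m}\sim\tfrac{n^{m}}{2^{m-1}m!}$ terms, contributing $\tfrac{mn^{m}}{2^{m-1}(m-2)!}p_{2}(1+o(1))$) and the remainder, and using $\sum_{e}s_{e}^{2}=\tfrac{n^{m}}{(m-1)!}(1+o(1))$, gives $\mathrm{tr}(V)=\big[2q_{2}+\tfrac{m}{2^{m-1}}(p_{2}-q_{2})\big]\tfrac{n^{m}}{(m-2)!}(1+o(1))$. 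Since $t_{1}\asymp\log n\ll n\log n\asymp\mathrm{tr}(V)$, we get $t_{2}=\tfrac{\mathrm{tr}(V)}{n}(1+o(1))=\tfrac{n^{m-1}}{(m-2)!}\big[\tfrac{m}{2^{m-1}}p_{2}-\tfrac{m-2^{m}}{2^{m-1}}q_{2}\big](1+o(1))$, and finally $v^{2}=\|MVM\|=\|t_{2}M\|=t_{2}(1+o(1))$.

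\emph{Main obstacle.} The laborious part is the entrywise/trace expansion of $B_{e}MB_{e}$: although $B_{e}=s_{e}\,\mathrm{diag}(\sigma_{e})-\sigma_{e}\sigma_{e}^{\intercal}$ lives on the $m\times m$ block of $e$, the corrections $-\tfrac{1}{n}\mathbf{1}\mathbf{1}^{\intercal}$ and $-\tfrac{1}{n}\sigma\sigma^{\intercal}$ in $M$ spawn several cross terms, each of which must be shown to be either harmless after the outer conjugation by $M$ or $O(1/n)$ relative to the leading term; the identity $B_{e}\mathbf{1}=0$ is precisely what keeps this under control, killing all the $\mathbf{1}\mathbf{1}^{\intercal}$-cross terms. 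The one conceptual (rather than computational) point is that the $\sigma\sigma^{\intercal}$-component of $V$ receives no $p_{2}$ contribution, because $\|MB_{e}\sigma\|^{2}$ is a multiple of $m^{2}-s_{e}^{2}$, which vanishes on every in-community hyperedge---this is exactly why $t_{1}$ involves only $q_{2}$. Everything else is binomial asymptotics of the same flavor as in the preceding Proposition.
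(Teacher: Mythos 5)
Your proof is correct and follows essentially the same route as the paper: decompose $V$ orthogonally onto $\mathrm{span}\{M,\tfrac{1}{n}\sigma\sigma^{\intercal}\}$ and extract $t_{1},t_{2}$ from $\sigma^{\intercal}V\sigma$ and $\mathrm{tr}(V)$, which is exactly the paper's $t_{1}=\tfrac{1}{n}\langle V,\sigma\sigma^{\intercal}\rangle$, $t_{2}=\tfrac{1}{n-2}\langle V,M\rangle$. You in fact supply two things the paper leaves implicit — the symmetry/commutant argument (together with $B_{e}\mathbf{1}=0$) justifying why $V$ lies in that two-dimensional span, and the explicit binomial evaluation of $\sum_{e}(m^{2}-s_{e}^{2})$ and $\sum_{e}\mathrm{tr}(Y_{e})$ — and your computations reproduce the stated $t_{1},t_{2}$ and $v^{2}=t_{2}(1+o(1))$.
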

\begin{proof}
Note that
$v^{2}=\|\sum\limits_{e}\mathbb{E}(\widetilde{A}_e-\mathbb{E}(\widetilde{A}_e))^{2}M Y_{e} M\|=\|M VM\|$, 
we need to show that $V=t_{1}\frac{1}{n}\sigma\sigma^{\intercal}+t_{2}M$.
Note that $V=\sum\limits_{e\in\mathbbm{1}_{+}(\sigma),\mathbbm{1}_{-}(\sigma)}p_{1}(1-p_{1})Y_{e}+\sum\limits_{\substack{e\not\in\mathbbm{1}_{+} \\ e\not\in \mathbbm{1}_{-}}}q_{1}(1-q_{1})Y_{e}.$ 
It is clear that 
\[V=\frac{<V,M>}{<M,M>}M+<V,\frac{1}{n}\sigma\sigma^{\intercal}>\frac{1}{n}\sigma\sigma^{\intercal}.\] 
Hence, $t_{1}=\frac{1}{n}<V,\sigma\sigma^{\intercal}>, t_{2}=\frac{1}{n-2}<V,M>=\frac{1}{n-2}[tr(V)-\frac{\sigma M\sigma^{\intercal}}{n}]$. Direct calculation yields \[t_{1}=\frac{mn^{m-1}}{(m-2)!}q_{2},\ \ \  t_{2}=\frac{n^{m-1}}{(m-2)!}\left[\frac{m}{2^{m-1}}p_{2}-\frac{m-2^{m}}{2^{m-1}}q_{2}\right].\] 
\end{proof}

\noindent
\textit{Proof of Theorem \ref{Spectra}.} Note that 
\[M(L-\mathbb{E}(L))M=\sum\limits_{e}[(\widetilde{A}e-\mathbb{E}(\widetilde{A}e))]M((\mathbb{I}_{e}^{\intercal}\sigma_{e})diag(\sigma_{e})-\sigma_{e}\sigma_{e}^{\intercal}))M.\] 
By Lemma \ref{mbernstein}, let $x=(1+\epsilon)\sqrt{2\log n}v,$ 
\begin{eqnarray*}
\mathbb{P}(\|M(L-\mathbb{E}(L)M\|)\geq t)
&\leq& ne^{-\frac{\frac{(1+L)^{2}2\log n      }{2}v^{2}}{v^{2}+\frac{4m(1+\epsilon)\sqrt{2\log n}v}{3}}}=ne^{-\frac{(1+\epsilon)^{2}\log n}{1+\frac{4m(1+\epsilon)\sqrt{2\log n}}{3v}}} \\
&\leq& e^{-\log n\left\{\frac{(1+\epsilon)^{2}}{1+\frac{4m(1+\epsilon)\sqrt{2\log n}}{3v}}-1\right\}}.
\end{eqnarray*}
If $\frac{4m\sqrt{2\log n}}{3v}\leq 1$, that is, 
\begin{equation}\label{tgreat}
t\geq\frac{32m^2(m-2)!2^{m-1}}{9(mp-(m-2^m)q)},
\end{equation}
take $\epsilon=1$. If $\frac{4m\sqrt{2\log n}}{3v}>1$, 
take $1+\epsilon=\delta\frac{8m\sqrt{2\log n}}{3v}$ for any constant $\delta>1$.
Either case, we have \[\mathbb{P}(\|M(L-\mathbb{E}(L)M\|)\geq t)=o(1).\]
With probability $1-o(1)$, one has
\begin{center}
$\|M(L-\mathbb{E}(L)M\|<(1+\epsilon)\sqrt{2t_{2}\log n}$.
\end{center}
If
\begin{equation}\label{pq1}
\frac{(p_{1}-q_{1})}{2^{m-1}}\frac{n^{m-1}}{(m-2)!}> (1+\epsilon)\sqrt{2t_{2}\log n},
\end{equation}
then
$\lambda_{3}(M\mathbb{E}(L)M)>0$ with probability $1-o(1).$

When (\ref{tgreat}) holds, $\epsilon=1$ and (\ref{pq1})
is equivalent to
\begin{eqnarray*}
[\frac{(p_{1}-q_{1})}{2^{m-1}}\frac{n^{m-1}}{(m-2)!}]^{2}&>&(1+\epsilon)^{2}2\log n \left(\frac{n^{m-1}}{(m-2)!}[\frac{m}{2^{m-1}}p_{2}-\frac{m-2^{m}}{2^{m-1}}q_{2}]\right), \\
\frac{(p_{1}-q_{1})}{2^{m-1}}\frac{n^{m-1}}{(m-2)!}&>&(1+\epsilon)^{2}2\log n [mp_{2}-(m-2^{m})q_{2}], \\
\frac{n^{m-1}(p-q)^{2}\alpha^{2}(1-\eta)^{2}}{2^{m-1}(m-2)!}&>&(1+\epsilon)^{2}2\log n [m\alpha p-(m-2^{m})\alpha q],\\
\frac{n^{m-1}(p-q)^{2}\alpha}{2^{m-1}(m-2)!}&>&(1+\epsilon)^{2}2\log n [mp-(m-2^{m})q].
\end{eqnarray*}
Note that $\alpha=\frac{t\log n}{n^{m-1}}$. Hence
\begin{eqnarray*}
\frac{t(p-q)^{2}}{2^{m-1}(m-2)!}&>&8[mp-(m-2^{m})q].
\end{eqnarray*}
Then
\[t>8\frac{2^{m-1}(m-2)![mp-(m-2^{m})q] }{(p-q)^2}=J_m(p,q),\]
which is larger than the right hand of (\ref{tgreat}).

When (\ref{tgreat}) fails, $1+\epsilon=\delta\frac{8m\sqrt{2\log n}}{3v}$ and (\ref{pq1})
is equivalent to

\begin{equation}\label{pq1}
\frac{(p_{1}-q_{1})}{2^{m-1}}\frac{n^{m-1}}{(m-2)!}> \delta\frac{8m\sqrt{2\log n}}{3}\sqrt{2\log n},
\end{equation}
hence
\[t>\delta\frac{16m}{3}\frac{2^{m-1}(m-2)!}{p-q},\ \ \delta>1,\]
which is not possible. Then the proof is complete.


\begin{thebibliography}{9}

\bibitem{A18}
Abbe, E., Community Detection and Stochastic Block Models:
Recent Developments. {\em Journal of Machine Learning Research} {\bf 2018}, {\em 18}, 1-86.

\bibitem{ABH16}
Abbe, E, Bandeira, A.S. and Hall, G. (2016).
Exact recovery in the stochastic block model.
\textit{IEEE Transactions on Information Theory}, 62(1): 471-487.



\bibitem{ABBS14}
Abbe, E., Bandeira,A.S., Bracher, A. and Singer, A.(2014). Decoding binary node labels from censored edge measurements: phase transition and efficient recovery.
\textit{IEEE Transactions on Network Science and Engineering}, 1(1), 10-22.



\bibitem{ALS18}
Ahn, K., Lee, K. and Suh, C. (2018).
Hypergraph Spectral Clustering in the Weighted Stochastic Block Model.
\textit{IEEE Journal of Selected Topics in Signal Processing}, 12(5), 2018.

\bibitem{ALS19}
Ahn, K., Lee, K. and Suh, C. (2019).
Community Recovery in Hypergraphs.
\textit{IEEE Transactions on Information Theory}, 12(5), 6561-6578.






%\bibitem{AC09}
%Anderson, R. and Chellapilla, K. 2009.
%Finding dense subgraphs with size bounds.
%\textit{WAW 2009}, 25-37.

\bibitem{BTYZQ21}
Bi,X., Tang, X., Yuan, Y., Zhang, Y. and Qu,A.(2021).
Tensors in statistics.
\textit{Annual Review of Statistics and Its Application},8,345-368.


%\bibitem{BGK17}
%Bhangale, A.,  Gandhi, R. and Kortsarz, G. (2017). Improved approximation algorithm for the
%dense-3-subhypergraph problem. https://arxiv.org/abs/1704.08620




%\bibitem{C00}
 %Charikar, M. 2000. Greedy approximation algorithms for finding dense components in a graph. 
 %\textit{APPROX}, 84–95.


%\bibitem{CKKMP18}
%Chiplunkar, A., Kapralov, M., Khanna, S., Mousavifar, A. and Peres, Y.(2018). Testing graph clusterability: algorithms and lower bounds. \textit{2018 IEEE 59th Annual Symposium on Foundations of Computer Science}, 497-508.




 
 \bibitem{CLW18}
Chien, I., Lin, C. and Wang, I.(2018).
Community Detection in Hypergraphs:
Optimal Statistical Limit and Efficient Algorithms.
\textit{Proceedings of the Twenty-First International Conference on Artificial Intelligence and Statistics}, 84:871-879.


%\bibitem{CY06}Chen, J. and  B. Yuan. 2006.
%Detecting functional modules in the yeast proteinprotein interaction
%network. \textit{Bioinformatics}, \textbf{22(18)}, 2283-2290.


%\bibitem{CS10}
%Chen, J. and Y. Saad. 2010.
%Dense subgraph extraction with application to community detection,
%\textit{IEEE Transactions on Knowledge and Data Engineering} 24,7: 1216-1230.

\bibitem{CY06}Chen, J. and Yuan, B. (2006).
Detecting functional modules in the yeast proteinprotein interaction
network. \textit{Bioinformatics}, \textbf{22(18)}, 2283-2290.




%\bibitem{CK10}Chertok, M. and Keller, Y. (2010).
%Efficient high order matching. 
%\textit{IEEE
%Transactions on Pattern Analysis and Machine Intelligence}, \textbf{32(12)}, 2205-2215.

%\bibitem{CDKKR18}
%Chlamtac, E., M. Dinitz, C. Konrad, G. Kortsarz, and G. Rabanca. %2018. 
%The Densest k-Subhypergraph problem. 
%\textit{SIAM Journal on Discrete Mathematics}. 32 (2):1458–77.






\bibitem{DGMS21}
Dhara, S., Gaudio, J., Mossel, E. and Sandon, C.(2021).
Spectral recovery of binary cencored block models.
\url{https://arxiv.org/pdf/2107.06338.pdf}




%\bibitem{Fort} Fortunato,S. (2010). Community detection in graphs. \textit{Physics Reports}, \textbf{486 (3-5)},
%75-174.










%\bibitem{GKT05}
%Gibson, D., R. Kumar, and A. Tomkins. 2005.
%Discovering large dense subgraphs in massive
%graphs.
%\textit{VLDB 05},721–732.



\bibitem{GD14} Ghoshdastidar, D. and Dukkipati, A. 
Consistency of spectral partitioning of
uniform hypergraphs under planted partition model. \textit{ Advances in Neural Information Processing Systems (NIPS)} {\bf 2014}, 397-405.




\bibitem{GD17} Ghoshdastidar, D. and Dukkipati A. 
Consistency of spectral hypergraph partitioning under planted partition model. \textit{The Annals of Statistics} {\bf 2017},
\textbf{45(1)}, 289-315.

\bibitem{GH16}
Gile, K and Handcock M. (2016).
Analysis of networks with missing data with application to the National Longitudinal Study of Adolescent Health,
\textit{Journal of the Royal Statistical Society Series C Applied Statistics}, 66, 501-519.


\bibitem{GMZZ16}Gao, C., Ma, Z., Zhang, A.Y. and Zhou, H. H. (2016).
Community detection in degree-corrected block models. \url{https://arxiv.org/pdf/1607.06993.pdf}.

%\bibitem{HWX17}
%Hajek, B., Wu, Y. and Xu, J.(2017).  Information limits for recovering a hidden community. 
%\textit{IEEE Transaction on Information Theory}, 63(8): 4729-4745.

\bibitem{HWX16}
Hajek, B., Wu, Y. and Xu, J. (2016).
Achieving exact cluster recovery threhold via semidefinite programming. 
\textit{IEEE Transaction on Information Theory}, 62(5): 2788-2796.



\bibitem{HWX18}
Hajek, B., Wu, Y. and Xu, J. 2018.
Recovering a hidden community beyond the Kesten Stigum threshold in $O(|E|\log^{\*}|V|)$ time. 
\textit{Journal of Applied Probability}, 55, 2: 325-352.

\bibitem{J15}
Jin,J. (2015).
Fast community detection by SCORE. 
\textit{The Annals of Statistics}, 43, 1: 57-89.

%\bibitem{HSBSSF16}
%Hooi,B., H.A., Song, A. Beutel, N. Shah, K. Shin,and C. Faloutsos. 2016.
%raudar:bounding graph fraud in the face of camouflage,
%\textit{KDD,ACM},895-904.


%\bibitem{HWC17}
%Hu, S., Wu, X. and Chan, T-H. (2017).
%Maintaining densest subsets efficiently in evolving hypergraphs
%\textit{Proceedings of the 2017 ACM on Conference on Information and Knowledge Management},929-938,Singapore.

\bibitem{H09}
Huisman, M. (2009).
Imputation of missing network data: Some simple procedures. \textit{Journal of Social Structure}, 10,1-29.



%\bibitem{FL18}
%Friedland, S. and Lim, L. 2018.
%Nuclear norm of higher-order tensors. \textit{Mathematics of Computation}, 87(311), 1255-1281.

\bibitem{GZFA} Goldenberg, A.,   Zheng, A. X. S.,  Fienberg, E., and Airoldi, E. M. (2010).
A survey of statistical network models. 
\textit{Foundations and Trends in Machine Learning 2}, \textbf{2}, 129-233.


%\bibitem{KS09}
%Khuller, S. and Saha, B. 2009.
%On finding dense subgraphs,
%\textit{ICALP 2009}: 597-608. 


\bibitem{KSX20}
Ke, Z., Shi, F. and Xia, D. (2020). Community detection for hypergraph networks via
regularized tensor power iteration. {\bf 2020}, https://arxiv.org/pdf/1909.06503.pdf.


\bibitem{K11}
Kim, S. 2011. Higher-order correlation clustering for image segmentation. 
\textit{Advances in Neural
Information Processing Systems} 1530–8.


\bibitem{KBG18}
Kim, C., Bandeira, A. and Goemans, M.(2018). Stochastic block model for hypergraphs: statistical limits and a semidefinite programming approach.


\bibitem{LR15}Lei, J. and Rinaldo, A. (2015).
Consistency of spectral clustering in stochastic block models. \textit{The Annals of Statistics}, \textbf{43(1)}, 215-237.


\bibitem{LGYS18}
Liu, M., Gao, Y., Yap, P. and Shen, D.(2018).
Multi-Hypergraph learning for incomplete
multimodality data,
\textit{IEEE JOURNAL OF BIOMEDICAL AND HEALTH INFORMATICS}, 22,1197-1208.


%\bibitem{LKH21}
%Liang, J., Ke C. and Honorio, J.(2021). Information theoretic limits of exacat recovery in subhypergraph models for community detection. \url{https://arxiv.org/pdf/2101.12369.pdf}





\bibitem{LZ20}
Luo, Y. and Zhang, A. (2020).
Open problem: average-case hardness of hypergraphic planted clique detection.
\textit{Proceedings of Machine Learning Research}, 1-4, 2020.



\bibitem{LJY15}
Liu, H., Jan, L. and Yan, S.(2015).
Dense subgraph partition of positive hypergraphs.
\textit{IEEE Transactions on Pattern Analysis and Machine Intelligence},37,3:541-554.

\bibitem{MNS15} Mossel, E., Neeman, J. and Sly, A. (2015).
Reconstruction and estimation in the planted partition model.
\textit{Probability Theory and Related Fields}, \textbf{162}, 431-461.

\bibitem{MNS17} Mossel, E., Neeman, J. and Sly, A. (2017). A proof of the block model threshold conjecture.
\textit{Combinatorica}, 1-44. 


\bibitem{SMM18}
Smith, J., Moody, J. and Morgan, J.(2018).
Network sampling coverage II: The effect of non-random missing data on network measurement,
\textit{Soc Networks}, 48, 78-99.


%\bibitem{SN18}
%Saad, H. and Nosratinia, A.(2018).
%Community detection with side information: exact recovery under the stochastic block model. \textit{IEEE Journal of Selected Topics in Signal Processing}, 12: 944-958.


\bibitem{SM97}
 Shi, J. and Malik, J. (1997).
 Normalized cuts and image segmentation.
 \textit{IEEE Transactions on Pattern Analysis and Machine Intelligence}, \textbf{22(8)}, 888-905.


%\bibitem{T15}
%Tsourakakis, C.(2015).
%The K-clique densest subgraph problem,
%\textit{Proceedings of the 24th International Conference on World Wide Web},1122-1132,Florence, Italy.

\bibitem{WF21}
Weng, H. and Feng, Y.(2021).
Community detection with nodal information: likelihood and its variational approximation.
\textit{Stat}, e428.

%\bibitem{WLKN09}
%Wu, M.,  Li, X.,  Kwoh, C.K. and  Ng. S.K. 2009.
%A coreattachment
%based method to detect protein complexes in
%ppi networks, 
%\textit{BMC bioinformatics}, 10: 169.


\bibitem{YB21}
Yuan,Y. and Qu, A.(2021).
Community detection with dependent connectivity.
\textit{The Annals of Statistics}, 49, 2378-2428.



\bibitem{YS21}
Yuan, M. and Shang, Z. (2021). Informatin Limits for Detection a Subhypergraph. \textit{STAT}, e407.

\bibitem{YS21b}
Yuan, M. and Shang, Z.  Sharp detection boundaries on testing dense subhypergraph. \textit{Bernoulli} {\bf 2021}, to appear.


%\bibitem{ZHS06}
%Zhou, D., Huang,J.  and  Scholkopf, B. (2006). Learning with hyper graphs: Clustering, classification, and embedding. 
%\textit{Advances in Neural Information Processing Systems}. 6:1601–8.

\bibitem{ZLZ11} Zhao, Y.,  Levina, E. and  Zhu., J.(2011).
Community extraction for social networks.
\textit{Proc. Natn. Acad. Sci. USA}, \textbf{108}, 7321-7326.

\bibitem{ZW21}
Zhen, Y. and Wang, J.(2021).
Community detection in general hypergraph via
graph embedding.
\textit{https://arxiv.org/pdf/2103.15035.pdf}.

\end{thebibliography}
\end{document}